\documentclass{article}

\usepackage{microtype}
\usepackage{graphicx}
\usepackage{subcaption}
\usepackage{booktabs} 

\usepackage{hyperref}
\usepackage{xurl}

\usepackage[accepted]{icml2026}

\usepackage{amsmath}
\usepackage{amssymb}
\usepackage{mathtools}
\usepackage{amsthm}

\usepackage[capitalize,noabbrev]{cleveref}

\theoremstyle{plain}
\newtheorem{theorem}{Theorem}[section]

\newtheorem{lemma}[theorem]{Lemma}

\theoremstyle{definition}

\theoremstyle{remark}

\usepackage[table]{xcolor}
\usepackage{booktabs}
\usepackage{multirow}
\usepackage{graphicx} 
\usepackage{tcolorbox}
\tcbuselibrary{skins}
\definecolor{lightblue}{rgb}{0.88, 0.95, 1.0}
\definecolor{lightred}{rgb}{1.0, 0.88, 0.88}
\definecolor{headergray}{rgb}{0.9, 0.9, 0.9}

\definecolor{codekw}{rgb}{0.7,0.13,0.13}  
\definecolor{codecomment}{rgb}{0,0.5,0}  
\definecolor{codefunc}{rgb}{0.58,0,0.82}  
\definecolor{backcolor}{rgb}{0.95,0.95,0.92} 
\newcommand{\hl}{\cellcolor{lightblue}}

\usepackage[textsize=tiny]{todonotes}
\usepackage{algorithm}
\renewcommand{\algorithmicrequire}{\textbf{Input:}}
\renewcommand{\algorithmicensure}{\textbf{Output:}}

\icmltitlerunning{Eigenvectors of Experts are Training-free Non-collapsing Routers}

\begin{document}

\twocolumn[
  \icmltitle{Eigenvectors of Experts are Training-free Non-collapsing Routers}



  \icmlsetsymbol{equal}{*}

  \begin{icmlauthorlist}
    \icmlauthor{Giang Do}{yyy}
    \icmlauthor{Hung Le}{yyy}
    \icmlauthor{Truyen Tran}{yyy}
  \end{icmlauthorlist}

  \icmlaffiliation{yyy}{Applied Artificial Intelligence Intiative (A2I2), Deakin University, Victoria, Australia}

  \icmlcorrespondingauthor{Giang Do}{truong.do@deakin.edu.au}

  \icmlkeywords{Machine Learning, ICML}

  \vskip 0.3in
]



\printAffiliationsAndNotice{}  

\begin{abstract}
  Sparse Mixture of Experts (SMoE) architectures improve the training efficiency of Large Language Models (LLMs) by routing input tokens to a selected subset of specialized experts. Despite their remarkable success, both training and inference in SMoE models suffer from the \textit{expert collapse} issue~\citep{NEURIPS2022_df4f371f}, which degrades model performance. Prior studies primarily focus on improving the router; however, such methods rely on training from scratch or fine-tuning, which requires high computational and data-processing costs. Furthermore, we demonstrate that, despite these efforts, the issue persists when advancing well-pretrained SMoE models, as evidenced by both theoretical and empirical results. To fill that gap, we analyze 
  the advanced SMoE models and observe that the eigenvectors of expert weight matrices encode rich semantic information, pointing to an effective alternative to conventional routing strategies. Building on this insight, we propose \textbf{Singular Value Decomposition SMoE (SSMoE)}, a novel and \textit{training-free} framework that leverages spectral properties of the expert weights to address the collapse issue and enhance model performance. Extensive experiments across diverse language and vision tasks, under both clean and corrupt data settings, demonstrate the strong generalization and robustness of SSMoE. Our findings highlight how a deeper understanding of model internals can guide the development of more effective SMoE architectures. Our implementation is publicly available at \url{https://github.com/giangdip2410/SSMoE}.

\end{abstract}


\section{Introduction}
\vspace*{-0.05in}

Sparse Mixture of Experts (SMoE) models, which offer strong scalability, have achieved remarkable success across various domains, including Natural Language Processing (NLP)~\citep{10.5555/3586589.3586709, pmlr-v162-du22c, qwen_moe, muennighoff2025olmoe}, visual representation learning~\citep{NEURIPS2021_48237d9f, shen-etal-2023-scaling}, and multimodal tasks~\citep{10.1109/TPAMI.2025.3532688, NEURIPS2024_4a3a14b9}. However, understanding the underlying reasons for SMoE's success remains a challenge, which limits its adoption in domains that demand both high accuracy and full explainability, such as medicine, economics, and law. To address this issue, Chen et al.~\citep{NEURIPS2022_91edff07} provides a theoretical explanation of SMoEs in a simplified two-layer non-linear setting. Meanwhile, XMoE~\citep{NEURIPS2022_df4f371f} highlights the problem of representation collapse in Sparse Mixture models from a theoretical standpoint, and SimSMoE~\citep{do-etal-2025-simsmoe} further investigates this collapse by estimating it through similarity metrics. Although most researchers regard the router as a crucial component and have proposed numerous enhancements~\citep{dai-etal-2022-stablemoe, do-etal-2023-hyperrouter, pham2024competesmoe}, VQMoE~\citep{do2025on} adopts a different strategy employing discrete representation learning to guide token routing.

\begin{tcolorbox}[
  colback=blue!5!white,
  colframe=black,
  coltitle=white,
  fonttitle=\bfseries,
  title=Research Question,
  boxrule=0.5mm,
  arc=3mm,
  top=1mm,
  bottom=1mm,
  left=2mm,
  right=2mm,
  enhanced,
  attach boxed title to top center={yshift=-2mm},
  boxed title style={colback=gray, sharp corners}
]
Is it possible to perform routing in Sparse Mixture of Experts (SMoEs) without a router?
\end{tcolorbox}

\begin{figure}[t] 
    \centering
    \includegraphics[width=\columnwidth]{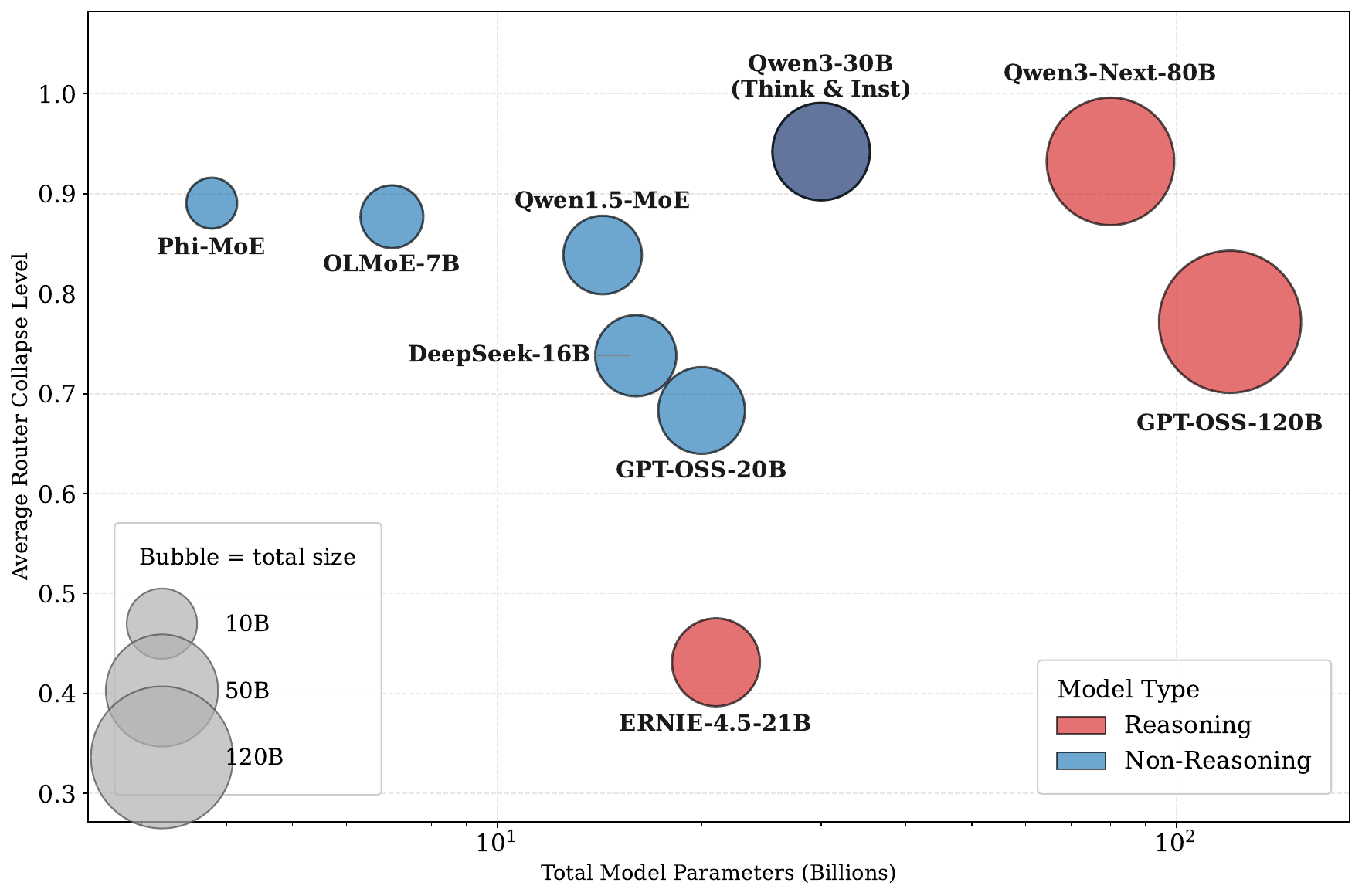}
    \caption{Average router collapse levels across layers for ten state-of-the-art MoE-based LLMs. The results demonstrate that while all models exhibit router collapse, the intensity varies between reasoning and non-reasoning models. Best viewed in color.}
    \label{fig:collapse}
    \vspace*{-0.05in}
\end{figure}


\begin{figure*}[t]
    \centering
    \begin{subfigure}[b]{0.49\textwidth}
        \centering
        \includegraphics[width=\textwidth]{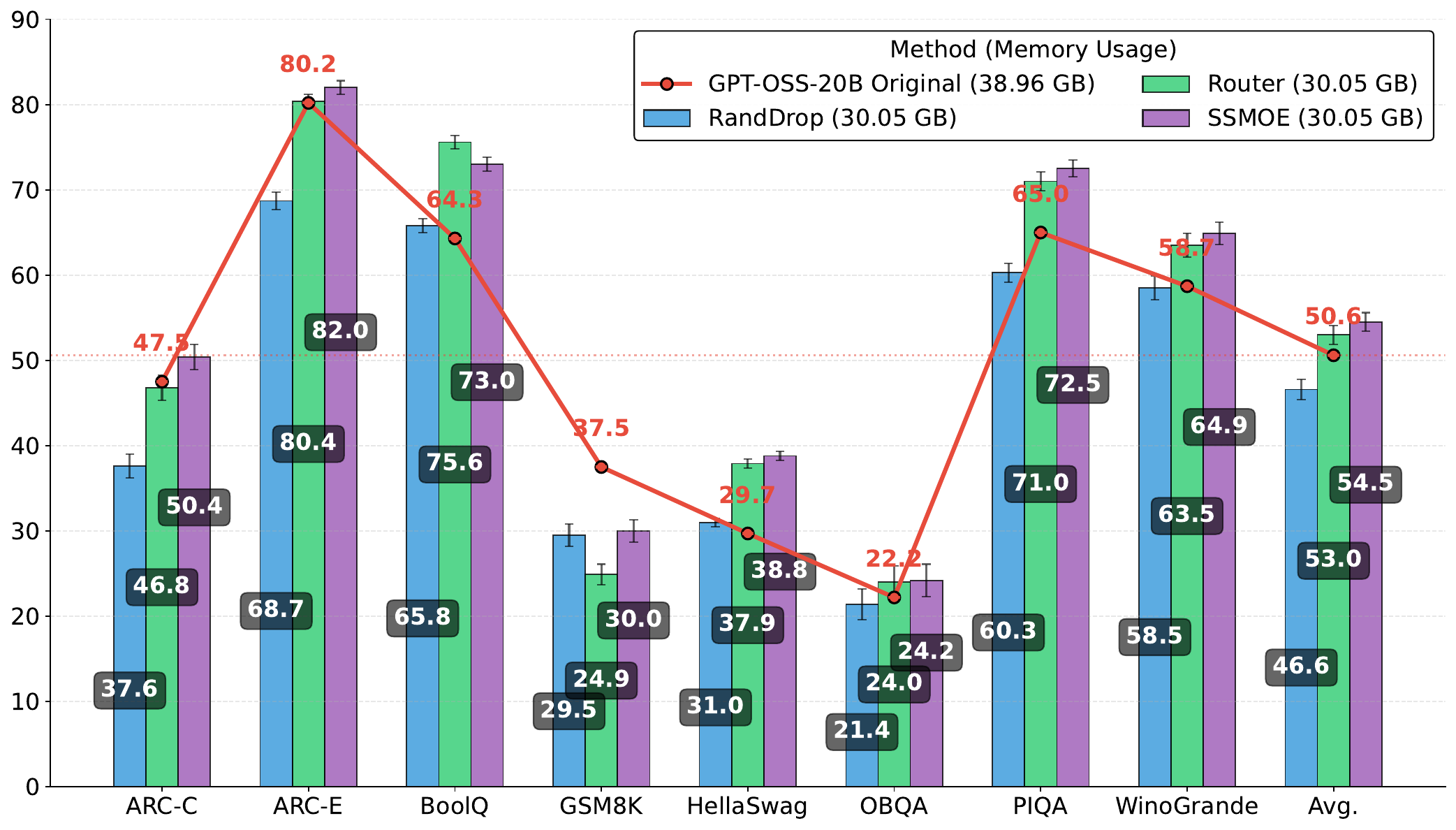}
        \caption{GPT-OSS-20B 5-shot Evaluation}
        \label{fig:benchmark_20b}
    \end{subfigure}
    \hfill
    \begin{subfigure}[b]{0.49\textwidth}
        \centering
        \includegraphics[width=\textwidth]{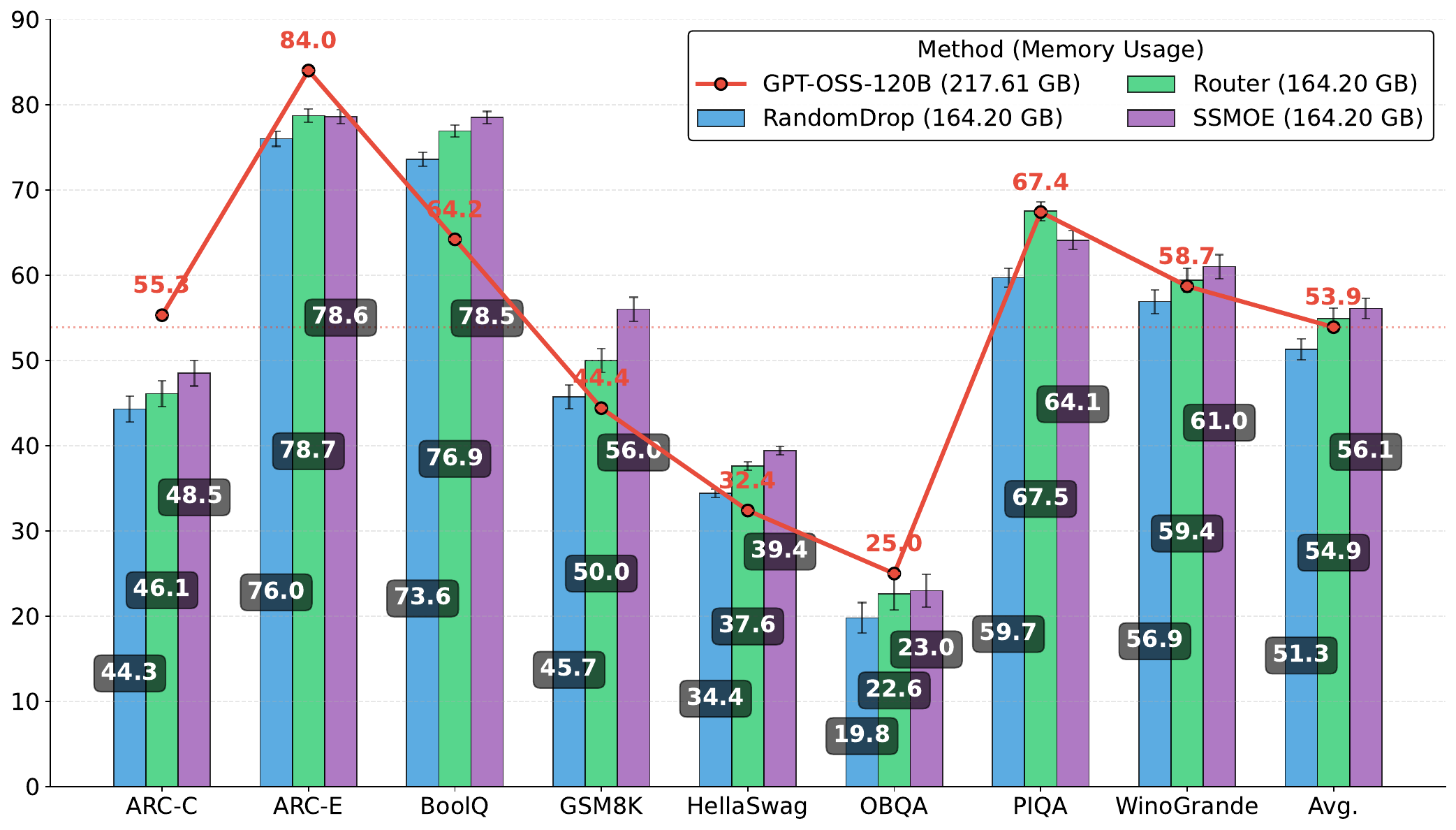}
        \caption{GPT-OSS-120B 5-shot Evaluation}
        \label{fig:benchmark_120b}
    \end{subfigure}
    \caption{We report performance comparisons across benchmarks for different GPT-OSS model scales under the 5-shot evaluation setting. The proposed SSMoE consistently outperforms all baseline methods across the eight datasets, achieving an average improvement of approximately 13\%. Notably, SSMoE surpasses the original GPT-OSS models with an average gain of about 6\%, while simultaneously reducing memory consumption by roughly 23\%.}
    \label{fig:benchmark_comparison}
    \vspace*{-0.05in}
\end{figure*}

\begin{tcolorbox}[
  colback=blue!5!white,
  colframe=black,
  coltitle=white,
  fonttitle=\bfseries,
]
\textbf{Finding 1:} We observe that the eigenvectors of expert weight matrices capture rich semantic information.
\end{tcolorbox}


We investigate routing behavior in several advanced, well-trained SMoE models~\citep{muennighoff2025olmoe, dai-etal-2024-deepseekmoe, qwen_moe} and observe that even pretrained routers continue to exhibit collapse issues—a finding consistent with prior work~\citep{pham2024competesmoe}. To validate this observation, we evaluate router behavior across ten state-of-the-art MoE-based models ranging from 4B to 120B parameters. Our experimental suite encompasses models of varying scales and architectures, including PhiMoE-Tiny~\citep{li2025slimmoe}, OLMoE~\citep{muennighoff2025olmoe}, Qwen1.5-MoE~\citep{qwen_moe}, and DeepSeek-MoE~\citep{dai-etal-2024-deepseekmoe}. We also evaluate recent large-scale models such as GPT-OSS-20B, GPT-OSS-120B~\citep{openai2025gptoss120bgptoss20bmodel}, ERNIE-4.5~\citep{ernie2025technicalreport}, and the Qwen3-MoE series (Instruct, Think, and Next variants)~\citep{qwen3technicalreport}. Notably, \textbf{all models exhibited router collapse to varying degrees} as Figure~\ref{fig:collapse}. This motivates our research question: \textit{Is it possible to perform routing in Sparse Mixture of Experts (SMoEs) without router?} To address this question, we conduct an in-depth analysis of expert weights, typically implemented as two or three Feedforward Neural Network (FFN) layers, through the lens of Singular Value Decomposition (SVD). Our observations reveal that the eigenvectors of these weight matrices encode rich semantic information. Building on this insight, we propose \textit{Singular Value Decomposition SMoE (SSMoE)}, a novel and efficient framework that leverages the eigenvectors of expert weight matrices. These eigenvectors not only serve as effective semantic representations but also contribute to improved token routing strategies. To validate the generalization ability of our approach, we evaluate SSMoE in a \textit{training-free} setting across multiple scenarios—including Large Reasoning Models, Large Language Models, and Vision-Language Models, and with both clean and corrupted data. SSMoE consistently outperforms baseline methods across all three domains, with particularly notable improvements on tasks requiring deep semantic understanding. Notably, SSMoE surpasses the state-of-the-art open-source LLMs - the GPT-OSS models~\citep{openai2025gptoss120bgptoss20bmodel}, with an average improvement of approximately 6\%, while simultaneously reducing memory consumption by roughly 23\% as Figure~\ref{fig:benchmark_comparison}. In summary, this paper makes the following key contributions: (1) We investigate the \textbf{expert embedding collapse} issue in advanced SMoE models; (2) We observe that the \textbf{eigenvectors} of expert weight matrices encode \textbf{rich semantic information}; (3) We introduce \textbf{SSMoE}, a novel and efficient framework that exploits these spectral properties to enhance both expert representation and token routing; (4) We propose an \textit{expert dropping} method based on SSMoE that improves Sparse Mixture of Experts \textbf{performance} while \textbf{reducing memory} usage. Our findings underscore the importance of understanding the internal structure of SMoEs, and demonstrate how spectral insights can lead to more effective and efficient architectures.

\section{Related Work}
\vspace*{-0.05in}

{\bf Sparse Mixture of Experts (SMoE). } Sparse Mixture of Experts (SMoE), a scalable extension of the original Mixture of Experts framework~\citep{jacobs1991, jordan1994}, has demonstrated strong empirical performance in both Natural Language Processing~\citep{10.5555/3586589.3586709, pmlr-v162-du22c, qwen_moe, muennighoff2025olmoe} and computer vision~\citep{NEURIPS2021_48237d9f, shen-etal-2023-scaling}. Despite these advances, SMoE models are frequently affected by representation collapse~\citep{NEURIPS2022_df4f371f}, a phenomenon where different experts converge to produce similar or redundant outputs, thereby reducing the effective capacity of the model.

Several approaches have been proposed to address this issue. XMoE introduces low-dimensional routing to promote expert diversity~\citep{NEURIPS2022_df4f371f}, while methods such as HyperRouter~\citep{do-etal-2023-hyperrouter} and StableMoE~\citep{dai-etal-2022-stablemoe} focus on improving routing stability and reducing variance in expert selection. However, representation collapse remains a persistent challenge, as recent studies continue to report its detrimental impact on model expressiveness and performance~\citep{pham2024competesmoe,do-etal-2025-simsmoe}. In addition, theoretical analyses~\citep{NEURIPS2022_91edff07} and discrete routing strategies~\citep{do2025on} have further contributed to understanding the underlying causes of collapse and developing potential mitigation strategies.

{\bf Singular Value Decomposition (SVD).} Singular Value Decomposition (SVD) has proven to be an effective tool for analyzing and fine-tuning Large Language Models~\citep{wang2025svdllm, wang-etal-2025-svd-llm}. Prior work by Thamm et al.~\citep{Thamm_2022} highlights that critical information in neural network weights is often concentrated in the largest singular values and their corresponding eigenvectors. In contrast, Staats et al.~\citep{staats2026small} demonstrate that although small singular values may seem negligible during pretraining, their removal after fine-tuning can substantially impair model performance. Building on this line of research, Sun et al.~\citep{sun2025transformersquared} propose adjusting the scale of singular values in weight matrices to construct expert vectors, each of which specializes in a distinct task type. Recent works such as SMILE~\citep{11175033} and MASS~\citep{crisostomi2026massmoergingadaptivesubspace} use eigenvector-based subspace analysis for model merging by decomposing fine-tuning updates $\Delta W$ in task-specific image models. In contrast, SSMoE targets router collapse in a single pretrained MoE by decomposing expert weights $W$ to capture intrinsic expert specialization in general-purpose LLMs and VLMs.



To the best of our knowledge, this is the first study to investigate the eigenvectors of expert weight matrices in SMoE models and propose their use as an alternative to conventional routers. Our approach not only offers robust and semantically rich representations but also enhances expert selection effectiveness.

\section{Methodology}
\vspace*{-0.05in}

\subsection{Preliminaries}
\vspace*{-0.05in}

\textbf{Sparse Mixture of Experts.}\; The Sparse Mixture of Experts (SMoE) architecture is a scalable extension of the standard Transformer model that introduces conditional computation by replacing dense multi-layer perceptrons (MLPs) with a collection of specialized subnetworks known as experts~\citep{shazeer2017}. Rather than activating all experts simultaneously, SMoE employs a sparse gating mechanism that selects a small subset of the most relevant experts—typically the top-$k$—based on the input. Given an input $\boldsymbol{x} \in \mathbb{R}^{b \times d}$, where $b$ is the batch size and $d$ is the input dimensionality, the SMoE layer computes its output as a weighted combination of the outputs from the top-$k$ experts:

\begin{align} \label{eq:smoe}
f_{\mathrm{SMoE}}(\boldsymbol{x}) = \sum_{i \in \text{TopK}(\mathcal{S}(\boldsymbol{x}), k)} \mathcal{S}_i(\boldsymbol{x}) \cdot E_i(\boldsymbol{x}),
\end{align}

Where each expert $i$ is defined by:
\begin{equation}
    E^{(i)}(\boldsymbol{x}) = \text{FFN}_1^{(i)} \cdot \sigma\left( \text{FFN}_2^{(i)}(\boldsymbol{x}) \right),
\end{equation}
where $\text{FFN}_1^{(i)} \in \mathbb{R}^{d \times h}$ and $\text{FFN}_2^{(i)} \in \mathbb{R}^{h \times d}$ and $\mathcal{S}(\boldsymbol{x}) = \operatorname{softmax}(\boldsymbol{x} W_e^\top)$ denotes the gating function, which produces a probability distribution over the $N$ available experts using a learned projection matrix $W_e \in \mathbb{R}^{N \times d}$. Each expert $E_i$ is an independent feedforward neural network, and only the $k$ experts with the highest gating scores $\mathcal{S}_i(\boldsymbol{x})$ are activated for a given input. This sparsity reduces computational cost and memory footprint while maintaining or improving model capacity and performance.

\subsection{Singular Value Decomposition SMoE (SSMoE)}
\vspace*{-0.05in}


\textbf{Eigenvectors Representation.}\; We illustrate the concept of \textit{eigenvector representations} in Figure~\ref{fig:ssmoe_sec}. Unlike SMoE, which relies on learnable embeddings for expert selection, our approach selects experts based on the eigenvectors of their weight matrices, which capture rich semantic information. The computation of \textit{eigenvector representations} involves the following four steps:

\textit{Step 1: Eigenvector Extraction.}  
For each expert $i$, compute:
\begin{equation}
\begin{aligned}
A^{(i)} &= \mathrm{FFN}_1^{(i)}\,\mathrm{FFN}_1^{(i)\top} \in \mathbb{R}^{d \times d}, \\
B^{(i)} &= \mathrm{FFN}_2^{(i)\top}\,\mathrm{FFN}_2^{(i)} \in \mathbb{R}^{d \times d}.
\end{aligned}
\end{equation}
Let \( \{ \boldsymbol{v}_j^{(i,1)} \} \) and \( \{ \boldsymbol{v}_j^{(i,2)} \} \) denote the eigenvectors of \( A^{(i)} \) and \( B^{(i)} \), respectively.

\textit{Step 2: Top-$c$ Eigenvector Selection.}  
Given the router embedding \( \boldsymbol{r}^{(i)} \in \mathbb{R}^d \), select the top-$c$ most similar eigenvectors from both sets:
\begin{equation}
\begin{aligned}
\mathcal{C}_1^{(i)} &= \mathrm{TopC}\!\bigl(\{\boldsymbol{v}_j^{(i,1)}\},\,\boldsymbol{r}^{(i)},\,c\bigr), \\
\mathcal{C}_2^{(i)} &= \mathrm{TopC}\!\bigl(\{\boldsymbol{v}_j^{(i,2)}\},\,\boldsymbol{r}^{(i)},\,c\bigr).
\end{aligned}
\end{equation}
Then compute the average to obtain the spectral embedding for expert \( i \):
\[
\mathcal{C}^{(i)} = \frac{1}{2} \left( \text{mean}(\mathcal{C}_1^{(i)}) + \text{mean}(\mathcal{C}_2^{(i)}) \right) \in \mathbb{R}^{d}
\]

\textit{Step 3: Spectral Routing Matrix.}  
Concatenate the expert spectral embeddings to form the routing weight matrix:
\[
W_{\mathrm{EV}} = \left[ \mathcal{C}^{(1)} \; \mathcal{C}^{(2)} \; \dots \; \mathcal{C}^{(N)} \right] \in \mathbb{R}^{d \times N}
\]

\textit{Step 4: Eigenvectors Representation.}  
\[
f_{\mathrm{EV}}(\boldsymbol{x}) = \boldsymbol{x} W_{\mathrm{EV}} \in \mathbb{R}^{b \times N} 
\]

\textbf{SSMoE.}\; We investigate the eigenvector representations of expert weights, which encode rich semantic information. Prior work~\citep{li2025your} has shown that SMoE routers capture contextually rich features. Building on these insights, we propose a novel architecture that combines both semantic and contextual information by linearly integrating the eigenvector representations with the SMoE router outputs. The detailed algorithm is provided in Algorithm~\ref{alg:ssmoe}. The output of the SSMoE layer for an input $\boldsymbol{x}$ is defined as:
\begin{equation} \label{eq:ssmoe}
f_{\mathrm{SSMoE}}(\boldsymbol{x}) = \sum_{i \in \text{TopK}(\mathcal{V}(\boldsymbol{x}), k)} \mathcal{V}_i(\boldsymbol{x}) \cdot E_i(\boldsymbol{x}),
\end{equation}
where $\mathcal{V}(\boldsymbol{x}) = \operatorname{softmax}(f_{\mathrm{EV}}(\boldsymbol{x})) \cdot \alpha + \mathcal{S}(\boldsymbol{x}) \cdot (1 - \alpha)$, and $\alpha \in [0,1]$ is a balancing factor that interpolates between the EV Router and the standard SMoE Router. The hyperparameter $\alpha$ is tuned based on the downstream task data, indicating whether token inputs should rely more on semantic information or contextual information. We empirically observe that $\alpha$ values between $0.5$ and $0.9$ tend to produce optimal results.


\begin{figure}[t] 
    \centering
    \includegraphics[width=\linewidth]{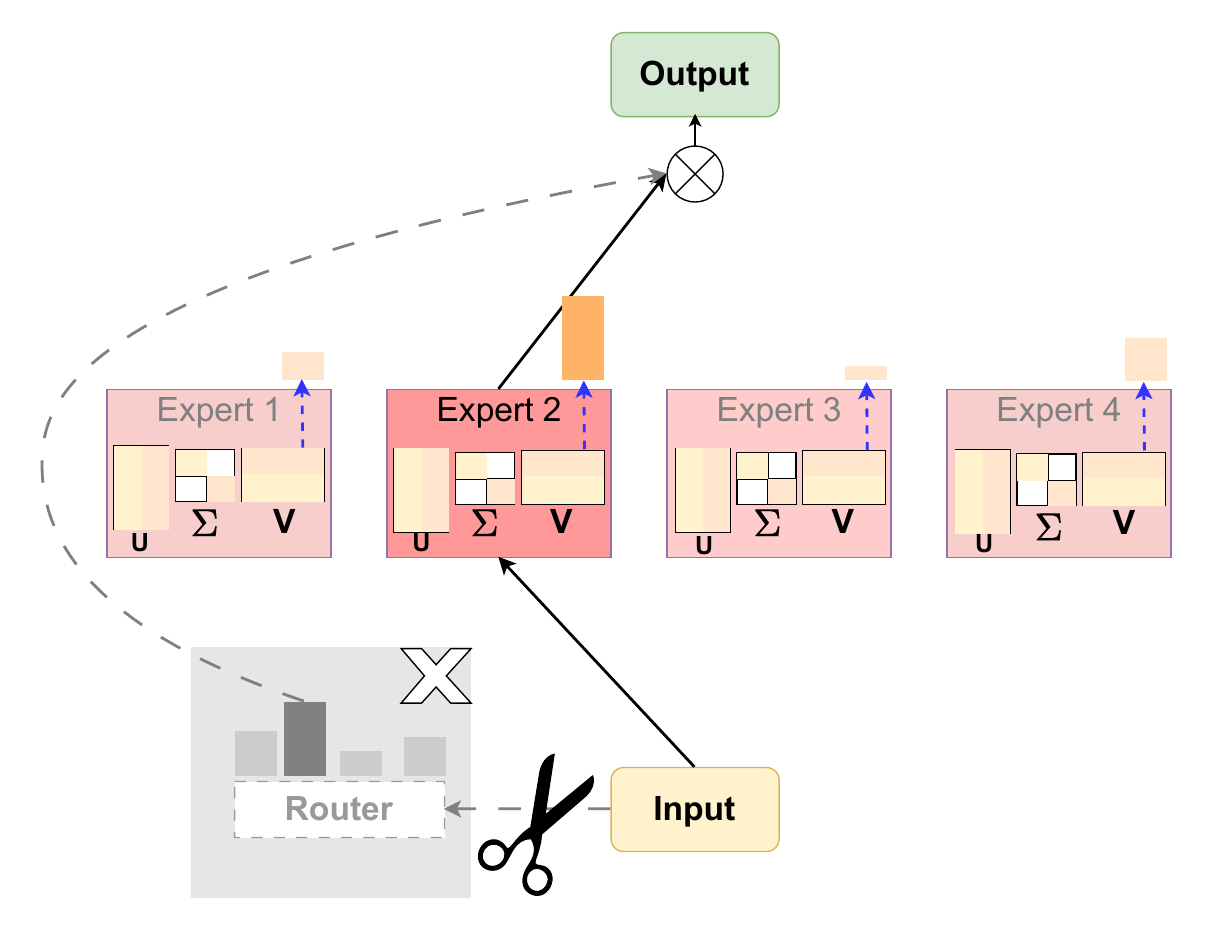}
    \caption{An illustration of our \textit{Eigenvectors Representation}, which leverages enriched information from the eigenvectors of expert weights. In contrast, SMoE employs a learnable expert embedding (router) to select the top-$k$ experts for each token. SSMoE provides an efficient and robust representation, as demonstrated in Section~\ref{sec:exp}. Best viewed in color.}
    \label{fig:ssmoe_sec}
    \vspace*{-0.05in}
\end{figure}

\subsection{Theoretical Analysis}

\begin{lemma}[SSMoE router reduces pairwise logit correlations: router-independent case]
\label{lem:mix_reduces_corr_v}
Let $f_{\mathrm{EV}}(x)=(f_{\mathrm{EV},1}(x),\dots,f_{\mathrm{EV,N}}(x))$ be the
pre-softmax logits produced by the eigenvector (EV) router and let
$s(x)=(s_1(x),\dots,s_N(x))$ be the learned SMoE router logits for input $x$.
Form the convex combination
\[
\mathcal{V}(x)\;=\;\alpha f_{\mathrm{EV}}(x) + (1-\alpha) s(x),\qquad \alpha\in[0,1].
\]

Assume the following conditions:
\begin{enumerate}
  \item (EV orthogonality) For every pair $i\neq j$,
  \(\displaystyle \mathbb{E}\big[f_{\mathrm{EV},i}(x)\,f_{\mathrm{EV},j}(x)\big]=0.\)
  \item (EV–learned independence) For every $i,j$, the EV logits are uncorrelated
  with the learned logits:
  \(\displaystyle \mathbb{E}\big[f_{\mathrm{EV},i}(x)\,s_j(x)\big]=0.\)
\end{enumerate}

Then for every $i\neq j$ the pairwise covariance of the combined logits satisfies
\[
\mathbb{E}\big[\mathcal{V}_i(x)\,\mathcal{V}_j(x)\big] \;=\; (1-\alpha)^2 \,\mathbb{E}\big[s_i(x)\,s_j(x)\big].
\]
In particular, when the learned router is collapsed so that
\(|\mathbb{E}[s_i s_j]|\) is large, any choice $\alpha\in(0,1]$ strictly reduces
the magnitude of pairwise logit covariance relative to using $s$ alone: 
\[
\big|\mathbb{E}[\mathcal{V}_i \mathcal{V}_j]\big| \;=\; (1-\alpha)^2 \,\big|\mathbb{E}[s_i s_j]\big|
\;<\; \big|\mathbb{E}[s_i s_j]\big|.
\]
Thus SSMoE router mitigates collapse (pairwise correlation)
of the learned router; the larger $\alpha$ the greater the reduction.
\end{lemma}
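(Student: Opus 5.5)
The plan is to expand the product $\mathcal{V}_i(x)\mathcal{V}_j(x)$ bilinearly and take expectations term by term, using linearity of expectation. For fixed $i\neq j$, write
\[
\mathcal{V}_i\mathcal{V}_j=\alpha^2 f_{\mathrm{EV},i}f_{\mathrm{EV},j}+\alpha(1-\alpha)\bigl(f_{\mathrm{EV},i}s_j+s_if_{\mathrm{EV},j}\bigr)+(1-\alpha)^2 s_is_j .
\]
We implicitly assume all second moments are finite, so that each expectation exists; this is needed for the term-by-term step to be legitimate.

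Taking expectations, each term is handled by one hypothesis:
\begin{itemize}
\item The first term vanishes by the EV orthogonality assumption (condition 1).
\item The term $\mathbb{E}[f_{\mathrm{EV},i}s_j]$ vanishes by condition 2 with the index pair $(i,j)$.
\item The term $\mathbb{E}[s_if_{\mathrm{EV},j}]=\mathbb{E}[f_{\mathrm{EV},j}s_i]$ vanishes by condition 2 with the roles swapped, i.e.\ the pair $(j,i)$. This is why condition 2 is stated for every ordered pair $i,j$.
\end{itemize}
What remains is exactly $(1-\alpha)^2\mathbb{E}[s_is_j]$, which is the claimed identity.

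For the strict inequality, take absolute values: $|\mathbb{E}[\mathcal{V}_i\mathcal{V}_j]|=(1-\alpha)^2|\mathbb{E}[s_is_j]|$, since $(1-\alpha)^2\ge 0$. For $\alpha\in(0,1]$ we have $0\le(1-\alpha)^2<1$. So strictness holds precisely when $\mathbb{E}[s_is_j]\neq 0$, which is guaranteed by the collapse hypothesis that this quantity is large, in particular nonzero. I will make this nonzero requirement explicit. Without it, both sides are $0$ and only equality holds.

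Monotonicity in $\alpha$ follows because $(1-\alpha)^2$ is strictly decreasing on $[0,1]$, so a larger $\alpha$ gives a greater reduction. There is no real obstacle here. The only care points are:
\begin{itemize}
\item making the nonzero requirement $\mathbb{E}[s_is_j]\neq 0$ explicit;
\item noting that the combination is applied to pre-softmax logits, as in the statement, rather than to the post-softmax $\mathcal{V}$ of \eqref{eq:ssmoe}, since the argument relies on linearity.
\end{itemize}
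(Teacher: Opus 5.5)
Your proof is correct and matches the paper's argument: expand $\mathcal{V}_i\mathcal{V}_j$ bilinearly, kill the EV--EV term with condition 1, and kill both cross terms with condition 2 applied to each ordered pair, which leaves $(1-\alpha)^2\mathbb{E}[s_is_j]$. Your added care points are both valid refinements: strictness needs $\mathbb{E}[s_is_j]\neq 0$, which the paper's proof leaves implicit, and the linearity argument applies to the pre-softmax logits, not to the post-softmax mixture the algorithm actually uses.
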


Lemma~\ref{lem:mix_reduces_corr_v} demonstrates that SSMoE alleviates the representation collapse problem commonly observed in conventional SMoE models~\citep{NEURIPS2022_df4f371f}. Lemma~\ref{lem:mix_reduces_corr_v} characterizes the router-independent case of SSMoE, including the simple-average EV variant used in our ablation. In this setting, EV descriptors are constructed from expert weights rather than from the learned router decisions, making the decorrelation assumption a plausible analytical model.

\paragraph{Extension to selective SSMoE.}
Our main SSMoE variant further uses the learned router to select a subset of eigenvector
directions. This router-guided selection improves empirical performance by filtering
misaligned EV directions, but it also introduces mild dependence between
$f_{\mathrm{EV}}(x)$ and $s(x)$. Therefore, for selective SSMoE, the exact decorrelation
condition in Lemma~\ref{lem:mix_reduces_corr_v} should be interpreted as an approximate
decorrelation condition:
\[
\left|\mathbb{E}\big[f_{\mathrm{EV},i}(x)s_j(x)\big]\right|\leq \varepsilon,
\qquad \forall i,j,
\]
for a small $\varepsilon\geq 0$. Under this relaxed condition, for every $i\neq j$,
\[
\left|
\mathbb{E}\big[\mathcal{V}_i(x)\mathcal{V}_j(x)\big]
\right|
\leq
(1-\alpha)^2
\left|
\mathbb{E}\big[s_i(x)s_j(x)\big]
\right|
+
2\alpha(1-\alpha)\varepsilon .
\]
Thus, selective SSMoE preserves the decorrelation effect of Lemma~\ref{lem:mix_reduces_corr_v}
up to a small cross-correlation error term. This distinction explains why the router-free
average EV variant already improves over the original model, while the router-guided variant
further improves performance by selecting more relevant eigenvector directions. Empirically,
we also observe low overlap between SSMoE and the original SMoE router, supporting that the
EV signal is complementary rather than a trivial copy of the learned router. The formal proof is presented in the Appendix~\ref{proof:lem_reduces}.



\section{Experiments} \label{sec:exp}
\vspace*{-0.05in}

\subsection{Large Reasoning Models}
\vspace*{-0.05in}

\textbf{Settings.}\; We evaluate our method using state-of-the-art open-source MoE LLMs from the GPT-OSS family~\citep{openai2025gptoss120bgptoss20bmodel}, specifically two variants: GPT-OSS-20B with 24 layers and 32 experts, and GPT-OSS-120B with 36 layers and 128 experts. Both models activate 4 experts per token. Due to resource constraints, we adopt an expert pruning strategy inspired by expert dropping~\citep{zhou-etal-2025-dropping}. We propose dropping 25\% of experts and compare three dropping strategies: (1) random expert dropping as a baseline; (2) dropping based on average similarity scores between tokens and the router; and (3) dropping based on SSMoE (our method). This approach reduces memory consumption by approximately 23\% compared to the original models. 

Moreover, we test SSMoE performance across eight established reasoning benchmarks. The ARC-Challenge and ARC-Easy benchmarks~\citep{clark2018thinksolvedquestionanswering} assess scientific question answering at varying difficulty levels. BoolQ~\citep{clark-etal-2019-boolq} evaluates binary reading comprehension through yes/no questions. GSM8K~\citep{cobbe2021trainingverifierssolvemath} provides a high-quality dataset of linguistically diverse grade school math word problems. HellaSwag~\citep{zellers-etal-2019-hellaswag} tests commonsense natural language inference (NLI), while OpenBookQA (OBQA)~\citep{mihaylov-etal-2018-suit} probes deeper understanding through question answering that requires reasoning over provided background facts. PIQA~\citep{DBLP:conf/aaai/BiskZLGC20} measures physical commonsense reasoning by selecting the most plausible outcomes in everyday scenarios, and WinoGrande~\citep{10.1145/3474381} targets coreference resolution through challenging pronoun disambiguation tasks.

\textbf{Reasoning Evaluation.}\; As shown in Figure~\ref{fig:benchmark_comparison}, SSMoE outperforms the Random Drop approach across all datasets in the 5-shot evaluation setting, achieving an average improvement of \textbf{17\%} for GPT-OSS-20B and \textbf{10\%} for GPT-OSS-120B. Notably, compared to the original models, SSMoE surpasses the baseline GPT-OSS by an average of \textbf{8\%} for the 20B version and \textbf{4\%} for the 120B version, while utilizing only \textbf{75\%} of the experts (24 out of 32 experts for GPT-OSS-20B and 96 out of 128 experts for GPT-OSS-120B). This leads to Finding 2: \textit{Not all experts are necessary for reasoning tasks}.

Following the 5-shot evaluation, we test SSMoE under the 10-shot setting without any additional training and report the results in Table~\ref{tab:bench10shots}. The results show that SSMoE outperforms the baseline by an average of \textbf{21\%} across the eight datasets for GPT-OSS-20B and \textbf{10\%} for GPT-OSS-120B. Compared to the original GPT-OSS models, SSMoE surpasses the 20B version by \textbf{11\%} and the 120B version by \textbf{4\%}, while reducing memory usage by \textbf{23\%} and \textbf{25\%}, respectively. Notably, SSMoE outperforms the original GPT-OSS-120B model by \textbf{24\%} on the BoolQ task and \textbf{17\%} on the GSM8K task. These results demonstrate that SSMoE excels not only in accuracy but also in memory efficiency.


To disentangle the effect of routing from expert pruning, we also evaluate SSMoE-FULL in the 10-shot setting, where all experts are retained. SSMoE-FULL improves over the original model by \textbf{6.4\%} on average in Table~\ref{tab:bench10shots}, confirming that the performance gains primarily arise from the proposed SSMoE routing mechanism rather than from reducing the number of experts. Moreover, SSMoE-FULL only slightly outperforms the 75\%-expert SSMoE variant by \textbf{1.1\%}, indicating that expert dropping is not the main source of the improvement. In contrast, naive expert removal through RandomDrop substantially degrades performance. Together, these results demonstrate that SSMoE improves expert utilization through better routing, rather than relying on expert pruning alone.

\begin{tcolorbox}[
  colback=blue!5!white,
  colframe=black,
  coltitle=white,
  fonttitle=\bfseries,
]
\textbf{Finding 2:} Certain experts can be omitted from reasoning tasks without compromising performance.
\end{tcolorbox}

\begin{table*}[t]
\centering
\resizebox{\textwidth}{!}{%
\begin{tabular}{@{}llcccccccccc@{}}
\toprule
\textbf{Model} & \textbf{Method} & \textbf{Memory (GB)} & \textbf{ARC-C} & \textbf{ARC-E} & \textbf{BoolQ} & \textbf{GSM8K} & \textbf{HellaSwag} & \textbf{OBQA} & \textbf{PIQA} & \textbf{WinoGrande} & \textbf{Avg.} \\
\midrule
\multirow{6}{*}{\shortstack{GPT-OSS-20B\\10-shot}} 
& Original & 38.96 & 44.2{\scriptsize$_{\pm1.5}$} & 77.9{\scriptsize$_{\pm0.9}$} & 62.8{\scriptsize$_{\pm0.8}$} & \textbf{36.8}{\scriptsize$_{\pm1.3}$} & 
29.0{\scriptsize$_{\pm0.5}$} & 19.8{\scriptsize$_{\pm1.8}$} & 62.5{\scriptsize$_{\pm1.1}$} & 57.5{\scriptsize$_{\pm1.4}$} & 48.8{\scriptsize$_{\pm1.2}$} \\
& \cellcolor{lightblue}SSMoE-FULL (Ours)
& \cellcolor{lightblue}38.96
& \cellcolor{lightblue}\textbf{50.2}{\scriptsize$_{\pm1.5}$}
& \cellcolor{lightblue}\textbf{79.3}{\scriptsize$_{\pm0.8}$}
& \cellcolor{lightblue}71.7{\scriptsize$_{\pm0.8}$}
& \cellcolor{lightblue}43.4{\scriptsize$_{\pm1.4}$}
& \cellcolor{lightblue}\textbf{40.9}{\scriptsize$_{\pm0.5}$}
& \cellcolor{lightblue}\textbf{33.8}{\scriptsize$_{\pm2.1}$}
& \cellcolor{lightblue}\textbf{63.9}{\scriptsize$_{\pm1.1}$}
& \cellcolor{lightblue}\textbf{58.3}{\scriptsize$_{\pm1.4}$}
& \cellcolor{lightblue}\textbf{55.2}{\scriptsize$_{\pm1.1}$} \\
\cmidrule(lr){2-12}
& RandDrop (Baseline) & 30.05 & 33.5{\scriptsize$_{\pm1.4}$} & 65.8{\scriptsize$_{\pm1.0}$} & 66.5{\scriptsize$_{\pm0.8}$} & 33.1{\scriptsize$_{\pm1.3}$} & 29.7{\scriptsize$_{\pm0.5}$} & 20.6{\scriptsize$_{\pm1.8}$} & 57.2{\scriptsize$_{\pm1.2}$} & 53.3{\scriptsize$_{\pm1.4}$} & 44.9{\scriptsize$_{\pm1.2}$} \\
& Router & 30.05 & 48.6{\scriptsize$_{\pm1.5}$} & 80.1{\scriptsize$_{\pm0.8}$} & \textbf{76.6}{\scriptsize$_{\pm0.7}$} & 22.3{\scriptsize$_{\pm1.1}$} & 35.6{\scriptsize$_{\pm0.5}$} & 24.2{\scriptsize$_{\pm1.9}$} & 71.3{\scriptsize$_{\pm1.1}$} & 60.3{\scriptsize$_{\pm1.4}$} & 52.4{\scriptsize$_{\pm1.1}$} \\
& \cellcolor{lightblue}SSMoE (Ours)
& \cellcolor{lightblue}30.05
& \cellcolor{lightblue}\textbf{52.1}{\scriptsize$_{\pm1.5}$}
& \cellcolor{lightblue}\textbf{82.4}{\scriptsize$_{\pm0.8}$}
& \cellcolor{lightblue}72.4{\scriptsize$_{\pm0.8}$}
& \cellcolor{lightblue}30.8{\scriptsize$_{\pm1.3}$}
& \cellcolor{lightblue}\textbf{37.5}{\scriptsize$_{\pm0.5}$}
& \cellcolor{lightblue}\textbf{23.0}{\scriptsize$_{\pm1.9}$}
& \cellcolor{lightblue}\textbf{72.2}{\scriptsize$_{\pm1.0}$}
& \cellcolor{lightblue}\textbf{62.2}{\scriptsize$_{\pm1.4}$}
& \cellcolor{lightblue}\textbf{54.1}{\scriptsize$_{\pm1.1}$} \\
\cmidrule(lr){2-12}
& \quad vs. Baseline & \textcolor{blue}{0.0\%} & \textcolor{blue}{+55.5\%} & \textcolor{blue}{+25.2\%} & \textcolor{blue}{+8.9\%} & \textcolor{red}{-6.9\%} & \textcolor{blue}{+26.3\%} & \textcolor{blue}{+11.7\%} & \textcolor{blue}{+26.2\%} & \textcolor{blue}{+16.7\%} & \textcolor{blue}{+20.5\%} \\
& \quad vs. Original & \textcolor{blue}{-22.9\%} & \textcolor{blue}{+17.9\%} & \textcolor{blue}{+5.8\%} & \textcolor{blue}{+15.3\%} & \textcolor{red}{-16.3\%} & \textcolor{blue}{+29.3\%} & \textcolor{blue}{+16.2\%} & \textcolor{blue}{+15.5\%} & \textcolor{blue}{+8.2\%} & \textcolor{blue}{+10.9\%} \\
\midrule
\multirow{6}{*}{\shortstack{GPT-OSS-120B\\10-shot}} 
& Original & 217.61 & \textbf{52.4}{\scriptsize$_{\pm1.5}$} & \textbf{83.0}{\scriptsize$_{\pm0.8}$} & 64.0{\scriptsize$_{\pm0.8}$} & 49.3{\scriptsize$_{\pm1.4}$} & 30.5{\scriptsize$_{\pm0.5}$} & 22.0{\scriptsize$_{\pm1.9}$} & 62.9{\scriptsize$_{\pm1.1}$} & 56.7{\scriptsize$_{\pm1.4}$} & 52.6{\scriptsize$_{\pm1.2}$} \\
& RandDrop (Baseline) & 164.20 & 43.3{\scriptsize$_{\pm1.4}$} & 74.1{\scriptsize$_{\pm0.9}$} & 74.2{\scriptsize$_{\pm0.8}$} & 46.7{\scriptsize$_{\pm1.4}$} & 33.2{\scriptsize$_{\pm0.5}$} & \textbf{21.4}{\scriptsize$_{\pm1.8}$} & 56.0{\scriptsize$_{\pm1.2}$} & 51.8{\scriptsize$_{\pm1.4}$} & 50.1{\scriptsize$_{\pm1.2}$} \\
& Router & 164.20 & 42.4{\scriptsize$_{\pm1.4}$} & 76.1{\scriptsize$_{\pm0.9}$} & 77.6{\scriptsize$_{\pm0.7}$} & 49.2{\scriptsize$_{\pm1.4}$} & 36.2{\scriptsize$_{\pm0.5}$} & 20.6{\scriptsize$_{\pm1.8}$} & \textbf{65.1}{\scriptsize$_{\pm1.1}$} & \textbf{58.2}{\scriptsize$_{\pm1.4}$} & 53.2{\scriptsize$_{\pm1.2}$} \\
& \cellcolor{lightblue}SSMoE (Ours)
& \cellcolor{lightblue}164.20
& \cellcolor{lightblue}45.6{\scriptsize$_{\pm1.5}$}
& \cellcolor{lightblue}79.1{\scriptsize$_{\pm0.8}$}
& \cellcolor{lightblue}\textbf{79.3}{\scriptsize$_{\pm0.7}$}
& \cellcolor{lightblue}\textbf{57.6}{\scriptsize$_{\pm1.4}$}
& \cellcolor{lightblue}\textbf{37.1}{\scriptsize$_{\pm0.5}$}
& \cellcolor{lightblue}21.2{\scriptsize$_{\pm1.8}$}
& \cellcolor{lightblue}63.3{\scriptsize$_{\pm1.1}$}
& \cellcolor{lightblue}56.4{\scriptsize$_{\pm1.4}$}
& \cellcolor{lightblue}\textbf{54.9}{\scriptsize$_{\pm1.1}$} \\
\cmidrule(lr){2-12}
& \quad vs. Baseline & \textcolor{blue}{0.0\%} & \textcolor{blue}{+5.3\%} & \textcolor{blue}{+6.8\%} & \textcolor{blue}{+6.9\%} & \textcolor{blue}{+23.3\%} & \textcolor{blue}{+11.7\%} & \textcolor{red}{-0.9\%} & \textcolor{blue}{+13.0\%} & \textcolor{blue}{+8.9\%} & \textcolor{blue}{+9.6\%} \\
& \quad vs. Original & \textcolor{blue}{-24.5\%} & \textcolor{red}{-13.0\%} & \textcolor{red}{-4.7\%} & \textcolor{blue}{+23.9\%} & \textcolor{blue}{+16.8\%} & \textcolor{red}{-3.6\%} & \textcolor{blue}{+0.6\%} & \textcolor{red}{-0.5\%} & \textcolor{blue}{+4.4\%} & \textcolor{blue}{+4.4\%} \\
\bottomrule
\end{tabular}%
}
\caption{Performance comparison of different methods on GPT-OSS models with 10-shot evaluation. Results show mean $\pm$ standard deviation across eight benchmarks. SSMoE-FULL denotes our SSMoE variant using 100\% of experts, enabling a memory-matched comparison with the original model. Best results in each model size are shown in \textbf{bold}. Improvements are calculated relative to baseline methods.} \label{tab:bench10shots}
\vspace*{-0.05in}
\end{table*}

\subsection{Large Language Models}
\vspace*{-0.05in}

\begin{figure}[t] 
    \centering
    \includegraphics[width=\linewidth]{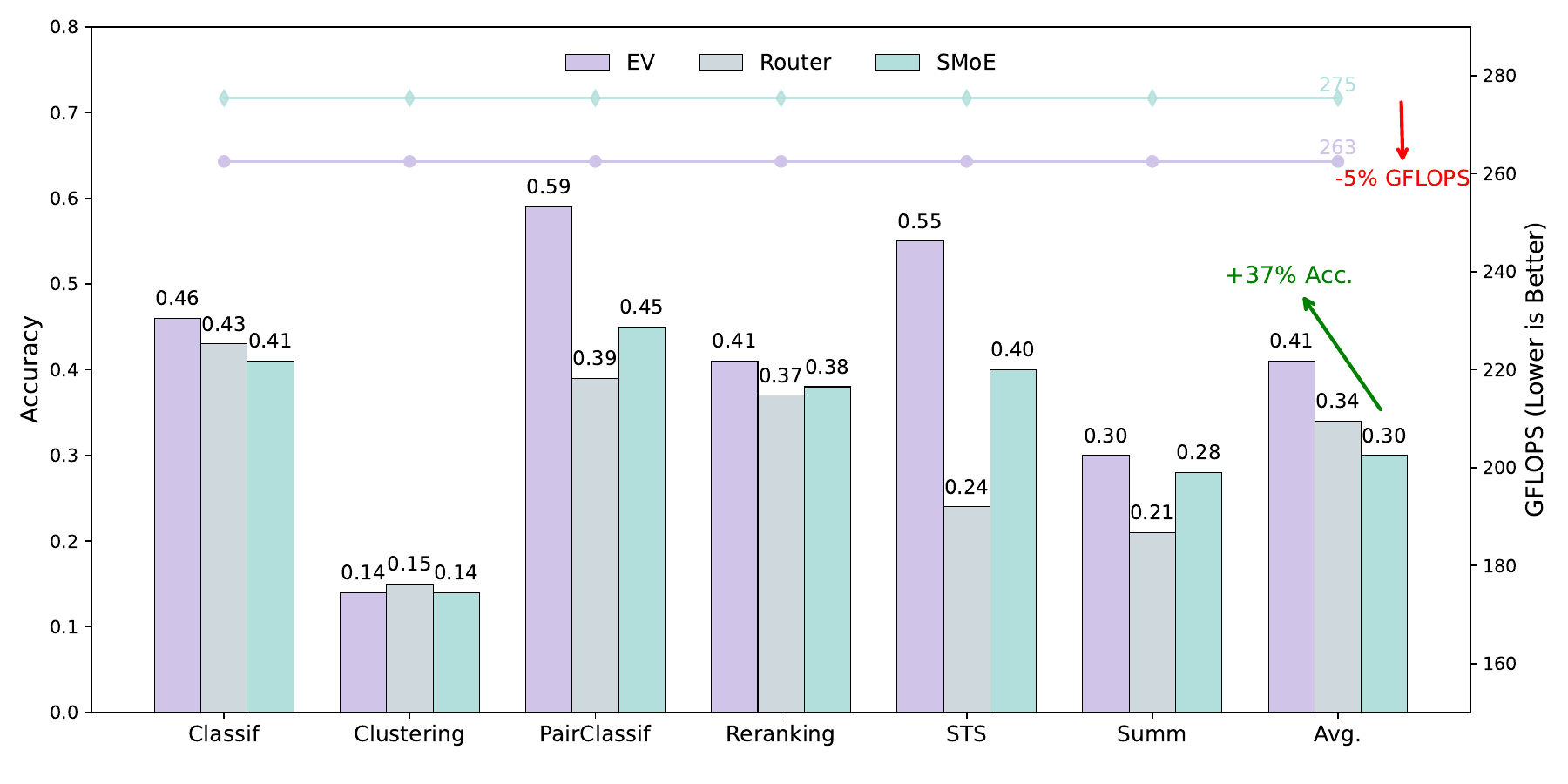}
    \caption{We compare the performance of the Eigenvector representation (EV) with traditional routers and \textit{OLMoE-7B}~\citep{muennighoff2025olmoe} on the Massive Text Embedding Benchmark (MTEB). The results show that EV outperforms OLMoE-7B on 37\% of the tasks across six benchmarks while reducing computational cost by 5\% in GFLOPs. These findings suggest that EV captures rich semantic information. Best viewed in color.}
    \label{fig:main_results}
    \vspace*{-0.05in}
\end{figure}

\textbf{Settings.}\; We evaluate the efficiency and robustness of our method compared to baseline approaches under two scenarios: (1) clean and (2) corrupt.

In the clean setting, inspired by \citep{li2025your}, we apply our method as a plug-in module to three state-of-the-art SMoE models (OLMoE-1B/7B~\citep{muennighoff2025olmoe}, Qwen-MoE-7B~\citep{qwen_moe}, and DeepSeekMoE-16B~\citep{dai-etal-2024-deepseekmoe}) without any fine-tuning. These models vary in size (from 7B to 16B parameters) and architecture (ranging from 16 to 28 layers and 60 to 64 experts). We assess their performance on a subset of tasks from the Massive Text Embedding Benchmark (MTEB)~\citep{muennighoff-etal-2023-mteb}, which covers a wide range of downstream sentence embedding applications, including classification, clustering, pair classification, re-ranking, retrieval, semantic textual similarity (STS), and summarization. Following the MTEB evaluation protocol, we use Accuracy for classification, V-Measure for clustering, Average Precision for pair classification, Mean Average Precision (mAP) for re-ranking, Normalized Discounted Cumulative Gain (nDCG) for retrieval, and Spearman’s correlation for STS and summarization.

In the corrupt setting, following the protocol in~\citep{nielsen2025tight}, we evaluate model robustness under adversarial perturbations. Specifically, we inject random ``AAA" token sequences into the input to simulate noisy or corrupted data (denoted as Corrupt). Further implementation details and extended results are provided in the Appendix section.

We compare our method against an approach that uses the similarity scores between the router and expert embeddings as hidden representations, referred to as Router Embedding (or simply Router). Additionally, we evaluate against MoEE~\citep{li2025your}, which combines Router Embedding with the hidden states of the SMoE model to form its representations. We also include comparisons with prompt-based methods, such as PromptEOL~\citep{jiang-etal-2024-scaling}, to assess in-context learning capabilities.

\textbf{Semantic Evaluation.}\; We evaluate the semantic quality of our method by extracting embeddings and testing them on MTEB tasks, as shown in Table~\ref{tab:nopo_SSMoE_results}. Overall, SSMoE (ours) consistently outperforms baseline methods across advanced SMoE models, OLMoE-7B, and DeepSeekMoE-16B, achieving performance improvements of \textbf{25\% to 30\%}. SSMoE demonstrates especially strong gains in tasks such as Classification, Clustering, and STS, highlighting its ability to extract high-quality embeddings without the need for fine-tuning or prompts. Interestingly, the Eigenvector-based representation captures rich semantic information, making the method both simple and highly effective—particularly for OLMoE, where it achieves a top score of 40.9\%.

\begin{table*}[!ht]
\centering
\resizebox{\linewidth}{!}{%
\begin{tabular}{@{}llccccc|ccccc@{}}
\toprule
\multicolumn{2}{c}{} & \multicolumn{5}{c|}{\textbf{Clean}} & \multicolumn{5}{c}{\textbf{Corrupt}} \\
\cmidrule(lr){3-7} \cmidrule(lr){8-12}
\textbf{Model} & \textbf{Task} & \textbf{Router} & \textbf{SMoE} & \textbf{MoEE} & \textbf{EV} & \textbf{SSMoE} 
                                & \textbf{Router} & \textbf{SMoE} & \textbf{MoEE} & \textbf{EV} & \textbf{SSMoE} \\ \midrule

\multirow{7}{*}{OLMoE-7B}
& Classification     & 41.2 & 43.4 & 41.8 & 45.9 & \textbf{50.6}   & 36.0 & 36.9 & 36.1 & 36.3 & \textbf{39.7} \\
& Clustering         & 13.7 & 14.7 & 14.5 & 14.2 & \textbf{16.4}   & 6.7 & 6.8 & 6.8 & 6.9 & \textbf{7.1} \\
& PairClassification & 45.3 & 39.1 & 45.7 & \textbf{59.2} & 51.2   & 28.1 & 26.7 & 27.6 & 29.1 & \textbf{30.7} \\
& Reranking          & 37.5 & 37.4 & 39.5 & 41.2 & \textbf{42.4}   & 27.8 & 26.8 & 27.7 & 28.2 & \textbf{29.3} \\
& STS                & 39.9 & 24.1 & 39.9 & \textbf{55.3} & 47.0   & 5.1 & 1.9 & 3.6 & 7.5 & \textbf{13.4} \\
& Summarization      & 28.4 & 20.9 & 29.8 & 29.6 & \textbf{30.6}   & 24.2 & 24.5 & 25.3 & 26.4 & \textbf{28.5} \\
& \cellcolor{lightblue}\textbf{Avg.} & \cellcolor{lightblue}34.3 & \cellcolor{lightblue}29.9 & \cellcolor{lightblue}35.2 & \cellcolor{lightblue}\textbf{40.9} & \cellcolor{lightblue}39.7 & \cellcolor{lightblue}21.3 & \cellcolor{lightblue}20.6 & \cellcolor{lightblue}21.2 & \cellcolor{lightblue}22.4 & \cellcolor{lightblue}\textbf{24.8} \\ \midrule

\multirow{7}{*}{Qwen-MoE-7B}
& Classification     & 43.8 & 50.3 & 47.7 & 43.4 & \textbf{51.6}   & 34.4 & 38.3 & 37.0 & 36.9 & \textbf{39.8} \\
& Clustering         & 13.6 & 27.4 & 25.2 & 13.7 & \textbf{31.1}   & 5.8 & 8.2 & 7.2 & 6.1 & \textbf{8.6} \\
& PairClassification & 45.9 & 46.9 & 51.5 & 47.0 & \textbf{55.4}   & 26.6 & 27.4 & 30.1 & \textbf{31.7} & 30.7 \\
& Reranking          & 39.6 & 45.3 & 48.5 & 40.3 & \textbf{49.1}   & 26.2 & 28.9 & 29.4 & 26.7 & \textbf{29.8} \\
& STS                & 38.8 & 38.0 & 51.8 & 41.6 & \textbf{57.9}   & 0.7 & 0.7 & 2.0 & 1.7 & \textbf{6.8} \\
& Summarization      & 28.3 & 13.4 & \textbf{31.2} & 30.3 & 29.6     & 26.7 & 23.0 & 25.5 & 27.9 & \textbf{28.5} \\
& \cellcolor{lightblue}\textbf{Avg.} & \cellcolor{lightblue}35.0 & \cellcolor{lightblue}36.9 & \cellcolor{lightblue}42.6 & \cellcolor{lightblue}36.0 & \cellcolor{lightblue}\textbf{45.8} & \cellcolor{lightblue}20.1 & \cellcolor{lightblue}21.1 & \cellcolor{lightblue}21.9 & \cellcolor{lightblue}21.8 & \cellcolor{lightblue}\textbf{24.0} \\ \midrule

\multirow{7}{*}{DeepSeekMoE-16B}
& Classification     & 43.4 & 46.6 & 44.4 & 44.0 & \textbf{48.6}   & 36.2 & 36.7 & 36.0 & \textbf{40.0} & 39.7 \\
& Clustering         & 13.4 & 18.1 & 17.8 & 14.3 & \textbf{21.6}   & 6.6 & 6.8 & 6.8 & 7.2 & \textbf{8.3} \\
& PairClassification & 45.5 & 40.9 & 46.1 & 46.7 & \textbf{51.2}   & 26.9 & 25.9 & 27.3 & 28.0 & \textbf{31.5} \\
& Reranking          & 38.5 & 38.9 & 42.2 & 39.2 & \textbf{44.9}   & 38.8 & 38.3 & 39.0 & 39.5 & \textbf{41.0} \\
& STS                & 37.7 & 26.3 & 40.2 & 40.0 & \textbf{50.1}   & 5.1 & 3.7 & 5.7 & 9.1 & \textbf{14.3} \\
& Summarization      & 24.9 & 22.0 & 24.4 & 28.9 & \textbf{29.9}   & 27.8 & 25.2 & 28.3 & 28.3 & \textbf{28.4} \\
& \cellcolor{lightblue}\textbf{Avg.} & \cellcolor{lightblue}33.9 & \cellcolor{lightblue}32.1 & \cellcolor{lightblue}35.9 & \cellcolor{lightblue}35.5 & \cellcolor{lightblue}\textbf{41.1} & \cellcolor{lightblue}23.6 & \cellcolor{lightblue}22.8 & \cellcolor{lightblue}23.8 & \cellcolor{lightblue}25.4 & \cellcolor{lightblue}\textbf{27.2} \\ \bottomrule

\end{tabular}}
\caption{Performance comparison across MTEB tasks under \textbf{clean} and \textbf{corrupt} settings. Results are shown side-by-side. The best score in each row (clean or corrupt) is highlighted in \textbf{bold}.}
\label{tab:nopo_SSMoE_results}
\vspace{-0.1in}
\end{table*}

\textbf{Robust Evaluation.}\; To evaluate the robustness of our method, we conduct experiments under the corrupt setting, as shown on the right side of Table~\ref{tab:nopo_SSMoE_results}. We observe a consistent trend: complex representations such as those used in SMoE and MoEE experience a significant performance drop when transitioning from clean to corrupt settings. In contrast, lighter representations, including Router and Eigenvector-based (EV) embeddings, demonstrate greater resilience under corrupt conditions. This observation aligns with the ``No Free Lunch" theorem~\citep{585893}, which suggests trade-offs between model complexity and generalization across varying data distributions. Notably, our proposed SSMoE achieves state-of-the-art performance on 15 out of 18 tasks in the clean setting and 16 out of 18 tasks in the corrupt setting, highlighting both its efficiency and robustness.

\begin{tcolorbox}[
  colback=blue!5!white,
  colframe=black,
  coltitle=white,
  fonttitle=\bfseries,
]
\textbf{Finding 3:} Eigenvectors of expert weights provide an efficient and robust representation.
\end{tcolorbox}

\begin{figure}[!ht] 
    \centering
    \includegraphics[width=\linewidth]{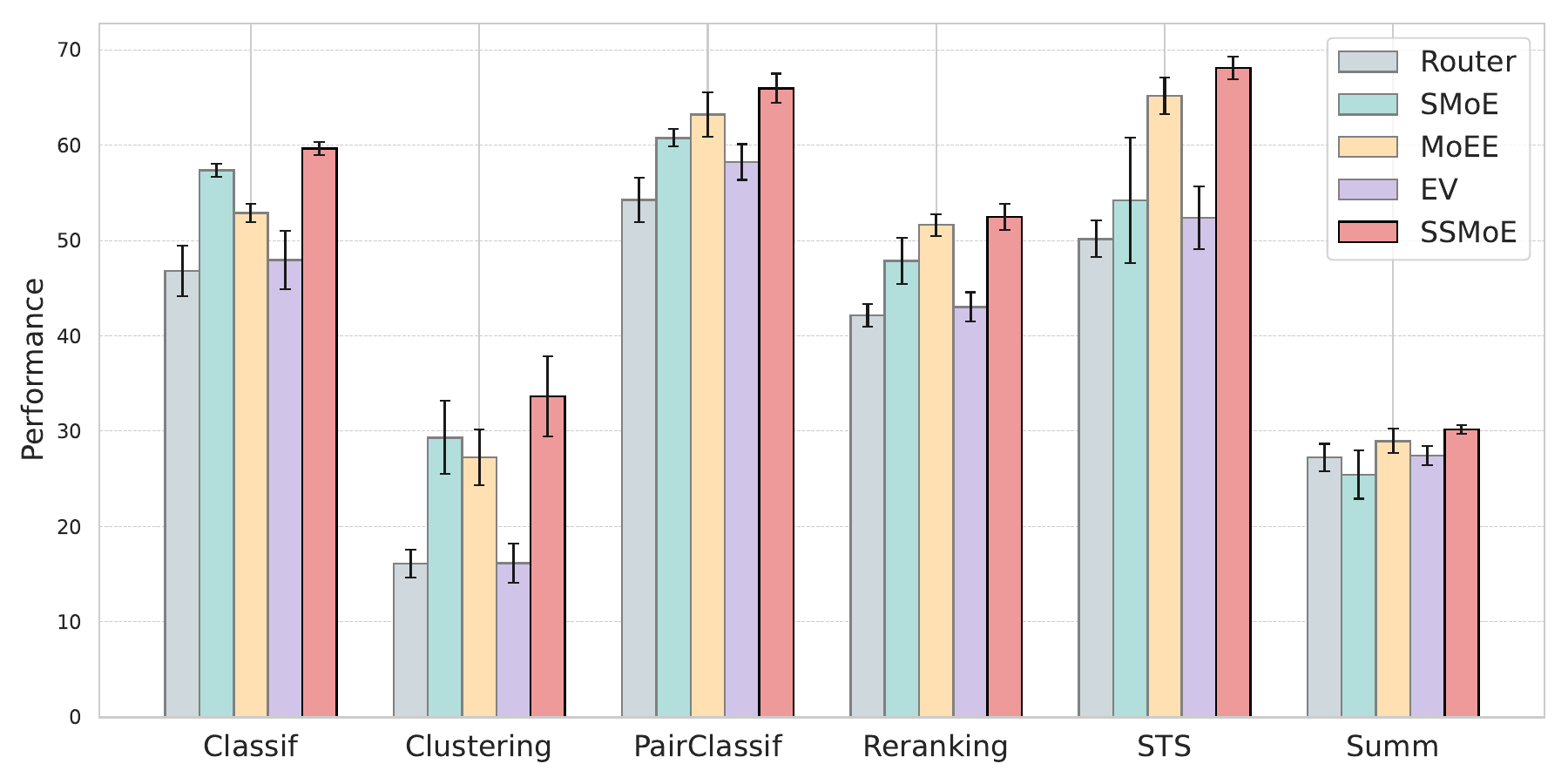}
    \caption{Average performance on MTEB tasks across three advanced LLMs (OLMoE-7B, Qwen-MoE-7B, and DeepSeekMoE-16B), comparing SSMoE, SMoE, Eigenvectors (EV), and MoEE using PromptEOL~\citep{jiang2023scalingsentenceembeddingslarge} for in-context learning evaluation. Best viewed in color.}
    \label{fig:pompt_results}
    \vspace*{-0.05in}
\end{figure}

\textbf{In-context Learning Evaluation.}\; Figure~\ref{fig:pompt_results} presents the performance of various methods on MTEB tasks using PromptEOL~\citep{jiang-etal-2024-scaling} for in-context learning evaluation. We compare the average performance and variance of SSMoE against baseline methods across three advanced LLMs (OLMoE-7B, Qwen-MoE-7B, and DeepSeekMoE-16B). The results show that SSMoE consistently achieves the highest average scores and the lowest variance among all methods. These findings indicate that SSMoE is not only effective at capturing semantic representations but also excels in in-context learning, underscoring its potential to enhance SMoE-based LLMs on complex tasks such as reasoning.

\subsection{Vision Language Models}
\vspace*{-0.05in}

\textbf{Settings.}\; To demonstrate the generalization capability of our proposed method, we evaluate the performance of SSMoE on two tasks: zero-shot image-text retrieval and zero-shot image classification. These tasks assess the model’s ability to capture fine-grained visual and semantic information, in comparison to the baseline CLIP-MoE model. CLIP-MoE~\citep{zhang-etal-2025-clip} is a variant of CLIP~\citep{pmlr-v139-radford21a} that incorporates a Sparse Mixture of Experts (SMoE) architecture, trained on the ShareGPT4V (ShareGPT) dataset~\citep{chen2024sharegpt4v} to enhance multimodal understanding. For image-text retrieval, we conduct evaluations on the COCO~\citep{10.1007/978-3-319-10602-1_48} and Flickr30k~\citep{7410660} datasets. Following the evaluation protocols for the LLMs section, we test both the efficiency and robustness of SSMoE under two settings: (1) clean images and (2) corrupted images. In the corrupted setting, we introduce random noise to input images, following the procedure described in~\citep{NEURIPS2019_e2c420d9}.

\begin{table*}[!ht]
\centering
\resizebox{\textwidth}{!}{%
\begin{tabular}{llcccccccccccc}
\toprule
\multirow{2}{*}{Setting} & \multirow{2}{*}{Model} & \multicolumn{3}{c}{COCO I2T} & \multicolumn{3}{c}{COCO T2I} & \multicolumn{3}{c}{Flickr I2T} & \multicolumn{3}{c}{Flickr T2I} \\
\cmidrule(lr){3-5} \cmidrule(lr){6-8} \cmidrule(lr){9-11} \cmidrule(lr){12-14}
 &  & @1 & @5 & @10 & @1 & @5 & @10 & @1 & @5 & @10 & @1 & @5 & @10 \\
\midrule
\multirow{2}{*}{} & CLIP (OpenAI)     & 56.1 & 79.5 & 86.8 & 35.4 & 60.1 & 70.2 & 48.5 & 72.6 & 80.8 & 28.0 & 49.3 & 58.7 \\
\midrule
\multirow{3}{*}{Clean} 
 & CLIP-MoE & 65.0 & 86.0 & 92.0 & 46.8 & 71.7 & 80.4 & 60.5 & 82.3 & 88.8 & 42.1 & 64.7 & 73.2 \\
 & \hl SSMoE (Ours) & \hl\textbf{65.7} & \hl\textbf{86.5} & \hl\textbf{92.3} & \hl\textbf{47.1} & \hl\textbf{72.0} & \hl\textbf{80.7} & \hl\textbf{61.1} & \hl\textbf{82.6} & \hl\textbf{89.0} & \hl\textbf{42.5} & \hl\textbf{65.2} & \hl\textbf{73.6} \\
 & \quad \% Improv.      & \textcolor{blue}{\textbf{+1.1\%}}  & \textcolor{blue}{\textbf{+0.6\%}}  & \textcolor{blue}{\textbf{+0.3\%}}  & \textcolor{blue}{\textbf{+0.6\%}}  & \textcolor{blue}{\textbf{+0.4\%}}  & \textcolor{blue}{\textbf{+0.4\%}}  & \textcolor{blue}{\textbf{+1.0\%}}  & \textcolor{blue}{\textbf{+0.4\%}}  & \textcolor{blue}{\textbf{+0.2\%}}  & \textcolor{blue}{\textbf{+1.0\%}}  & \textcolor{blue}{\textbf{+0.8\%}}  & \textcolor{blue}{\textbf{+0.5\%}} \\
\midrule
\multirow{3}{*}{Corrupt} 
 & CLIP-MoE & 36.2 & 60.2 & 69.9 & 29.4 & 51.7 & 61.7 & 31.2 & 51.8 & 60.5 & 23.3 & 41.4 & 49.6 \\
 & \hl SSMoE (Ours) & \hl\textbf{37.2} & \hl\textbf{61.0} & \hl\textbf{70.3} & \hl\textbf{30.0} & \hl\textbf{52.1} & \hl\textbf{62.2} & \hl\textbf{31.6} & \hl\textbf{52.1} & \hl\textbf{60.7} & \hl\textbf{23.6} & \hl\textbf{41.9} & \hl\textbf{50.1} \\
 & \quad \% Improv.      & \textcolor{blue}{\textbf{+2.8\%}}  & \textcolor{blue}{\textbf{+1.3\%}}  & \textcolor{blue}{\textbf{+0.6\%}}  & \textcolor{blue}{\textbf{+2.0\%}}  & \textcolor{blue}{\textbf{+0.8\%}}  & \textcolor{blue}{\textbf{+0.8\%}}  & \textcolor{blue}{\textbf{+1.3\%}}  & \textcolor{blue}{\textbf{+0.6\%}}  & \textcolor{blue}{\textbf{+0.3\%}}  & \textcolor{blue}{\textbf{+1.3\%}}  & \textcolor{blue}{\textbf{+1.2\%}}  & \textcolor{blue}{\textbf{+1.0\%}} \\
\bottomrule
\end{tabular}
}
\caption{Performance comparison across different models and settings on COCO and Flickr benchmarks for Image-to-Text (I2T) and Text-to-Image (T2I) retrieval tasks. Best results per column are in \textbf{bold}. Percentage improvements of SSMoE over CLIP-MoE baseline are shown with blue color indicating gains.}
\label{tab:retrieval_results}
\vspace*{-0.05in}
\end{table*}


\textbf{Zero-Shot Image-Text Retrieval Evaluation.}\; Table~\ref{tab:retrieval_results} reports results for Image-to-Text (I2T) and Text-to-Image (T2I) retrieval at Recall@1, Recall@5, and Recall@10. Under the clean setting, SSMoE consistently outperforms both the original CLIP model and the CLIP-MoE baseline across all retrieval tasks. Specifically, SSMoE achieves improvements of up to \textbf{+1.1\%} at Recall@1 on COCO I2T, and up to \textbf{+1.0\%} on Flickr I2T, demonstrating its enhanced ability to capture fine-grained cross-modal relationships.

In the corrupted setting—where random noise is added to the input images—SSMoE maintains its superiority, showing improved robustness compared to CLIP-MoE. Notably, SSMoE yields gains of \textbf{+2.8\%} at Recall@1 for COCO I2T and consistent improvements across all other metrics and datasets. These findings confirm the effectiveness of the SSMoE design in improving both accuracy and robustness for zero-shot retrieval tasks, without requiring any additional training.

\textbf{Statistical Significance Test.}\; We observe that SSMoE consistently improves over the original CLIP-MoE baseline across datasets, retrieval metrics, and retrieval directions. While the absolute gains on clean benchmarks are modest, ranging from approximately $0.2$ to $1.1$ points, they are meaningful because SSMoE is training-free and introduces no additional optimization cost. To verify that these improvements are statistically reliable rather than due to evaluation noise, we conduct paired bootstrap resampling with $1000$ samples. Table~\ref{tab:vision_bootstrap} reports the original baseline score, SSMoE score, absolute improvement, $95\%$ confidence interval, and paired bootstrap $p$-value. Across both COCO and Flickr30k, all improvements are positive, with narrow confidence intervals and statistically significant $p$-values. On COCO, SSMoE improves R@1 by $+0.8$ points with a $95\%$ confidence interval of $[0.20, 1.58]$ and $p=0.002$. Similarly, on Flickr30k, SSMoE improves I2T@5 by $+0.6$ points with a confidence interval of $[0.40, 0.88]$ and $p<0.001$. These results indicate that the gains of SSMoE are consistent and statistically significant despite their small absolute magnitude. Moreover, as shown in the corruption experiments, the improvements become larger under distribution shift, suggesting that the main benefit of SSMoE lies not only in improving saturated clean retrieval performance, but also in enhancing robustness through better expert utilization.

\begin{table}[t]
\centering
\small
\setlength{\tabcolsep}{4pt}
\resizebox{\linewidth}{!}{%
\begin{tabular}{lccccc}
\toprule
Metric & CLIP-MoE & SSMoE (Ours) & $\Delta$ & 95\% CI & $p$-value \\
\midrule
\multicolumn{6}{c}{\textbf{COCO I2T}} \\
\midrule
@1  & 65.0 & 65.9 & +0.8 & [0.20, 1.58] & 0.002 \\
@5  & 86.0 & 86.5 & +0.5 & [0.00, 1.00] & 0.024 \\
@10 & 92.0 & 92.2 & +0.2 & [0.05, 0.40] & 0.018 \\
\midrule
\multicolumn{6}{c}{\textbf{Flickr30k I2T}} \\
\midrule
@1  & 61.8 & 62.3 & +0.5 & [0.15, 0.80] & 0.001 \\
@5  & 83.4 & 84.1 & +0.6 & [0.40, 0.88] & $<0.001$ \\
@10 & 89.6 & 89.9 & +0.3 & [0.06, 0.44] & 0.008 \\
\bottomrule
\end{tabular}
}
\caption{Paired bootstrap significance testing on the clean setting using $1000$ bootstrap samples. $\Delta$ denotes the absolute improvement of SSMoE over the CLIP-MoE baseline.}
\label{tab:vision_bootstrap}
\vspace*{-0.05in}
\end{table}

\subsection{Ablation Studies} \label{subsec:ablationv2}
\vspace*{-0.05in}

We investigate the effectiveness and robustness of SSMoE to the different hyper-parameter settings.

\textbf{Selective vs. averaged eigenvectors.}\; We next ablate whether SSMoE depends on the learned router for selecting eigenvector directions. In addition to the router-guided variant, we evaluate a router-free variant that simply averages the eigenvector descriptors of each expert. As shown in Table~\ref{tab:avg_ev_ablation}, this simple averaging strategy already improves over the original model by $+3.9$ points on average, indicating that the eigenvector representations themselves encode useful semantic information about expert specialization. Router-guided selection further improves the average gain to $+5.3$ points by filtering less relevant or misaligned directions. These results show that the router acts as a lightweight selection mechanism rather than a strict dependency: eigenvectors provide the core routing signal, while the router improves its precision.

\begin{table}[t]
\centering
\small
\resizebox{\linewidth}{!}{
\begin{tabular}{lccc}
\toprule
Task & Orig. & SSMoE & SSMoE w/ Avg. EV \\
\midrule
ARC-C      & 44.2 & \textbf{52.1} & 49.4 \\
ARC-E      & 77.9 & \textbf{82.4} & 80.7 \\
BoolQ      & 62.8 & \textbf{72.4} & 69.8 \\
GSM8K      & \textbf{36.8} & 30.8 & 33.3 \\
HellaSwag  & 29.0 & \textbf{37.5} & 36.5 \\
OBQA       & 19.8 & \textbf{23.0} & 22.8 \\
PIQA       & 62.5 & \textbf{72.2} & 69.5 \\
WinoGrande & 57.5 & \textbf{62.2} & 59.6 \\
\midrule
\textbf{Avg.} & 48.8 & \textbf{54.1} & 52.7 \\
\bottomrule
\end{tabular}
}
\caption{Ablation of router-guided eigenvector selection. The router-free variant, which simply averages eigenvector descriptors, still outperforms the original model, while router-guided selection provides further gains.}
\label{tab:avg_ev_ablation}
\vspace*{-0.05in}
\end{table}


\textbf{Number of Eigenvectors.}\;We assess the effectiveness and stability of SSMoE under varying hyper-parameter configurations. In particular, we examine the impact of changing the number of top eigenvectors ($\text{TOPC}$) used in the model. As shown in Table~\ref{table:ablation_topc_alpha}, performance across datasets demonstrates that SSMoE remains robust to different values of $\text{TOPC}$, with optimal results often achieved at higher settings. In practice, we observe that setting $\text{TOPC} \approx 50$ yields the best performance.


\textbf{Balancing factor $\alpha$.}\;We analyze the effect of the balancing factor $\alpha$ on the performance of OLMoE-1B across three classification tasks. As shown in Table~\ref{table:ablation_topc_alpha}, performance generally improves with increasing $\alpha$, with optimal results consistently observed around $\alpha = 0.9$. Empirically, we find that setting $\alpha$ in the range of approximately $0.5$ to $0.9$ provides a robust trade-off between the two, yields strong and stable performance.

\begin{table}[!ht]
\centering
\small
\resizebox{\linewidth}{!}{%
\setlength{\tabcolsep}{4.2pt}
\renewcommand{\arraystretch}{1.08}
\begin{tabular}{@{}lcccccccccc@{}}
\toprule
& \multicolumn{4}{c}{$\text{TOPC}$} & \multicolumn{6}{c}{$\alpha$} \\
\cmidrule(lr){2-5} \cmidrule(l){6-11}
Dataset & 5 & 10 & 20 & 50 & 0.0 & 0.3 & 0.5 & 0.7 & 0.9 & 1.0 \\
\midrule
Emotion & 36.5 & 35.4 & 35.3 & \textbf{37.4} & 26.1 & 27.5 & 29.9 & 30.2 & \textbf{32.7} & 28.9 \\
Toxic   & 57.4 & 57.6 & 60.0 & \textbf{60.7} & 60.2 & 61.1 & 59.8 & 62.2 & \textbf{62.6} & 59.3 \\
Tweet   & 51.3 & 51.7 & 53.2 & \textbf{53.4} & 48.4 & 49.8 & 53.3 & 52.5 & \textbf{54.3} & 48.9 \\
\bottomrule
\end{tabular}
}
\caption{Ablation studies on OLMoE-1B classification tasks. We vary the number of selected eigenvectors $\text{TOPC}$ and the balancing factor $\alpha$. Higher values are better, with the best results highlighted in \textbf{bold}.}
\label{table:ablation_topc_alpha}
\vspace*{-0.05in}
\end{table}

\subsection{In-depth Analysis}\label{depth_understand}
\vspace*{-0.05in}

\begin{tcolorbox}[
  colback=blue!5!white,
  colframe=black,
  coltitle=white,
  fonttitle=\bfseries,
]
\textbf{Finding 4:} SMoE routers exhibit increased collapse in deeper layers, whereas EV routers maintain orthogonality.
\end{tcolorbox}

     

\textbf{Collapse Problems.}\; In practice, we observe that the SMoE router exhibits increasing levels of collapse across layers, as shown in Figure~\ref{fig:ren8}. In contrast, the EV router maintains consistent orthogonality layer by layer, as illustrated in Figure~\ref{fig:rtx8}, which is theoretically supported by Lemma~\ref{lem:orth_simple}.

\begin{figure}[!ht]
    \centering
    \begin{subfigure}{\linewidth}
        \centering
        \includegraphics[width=\linewidth]{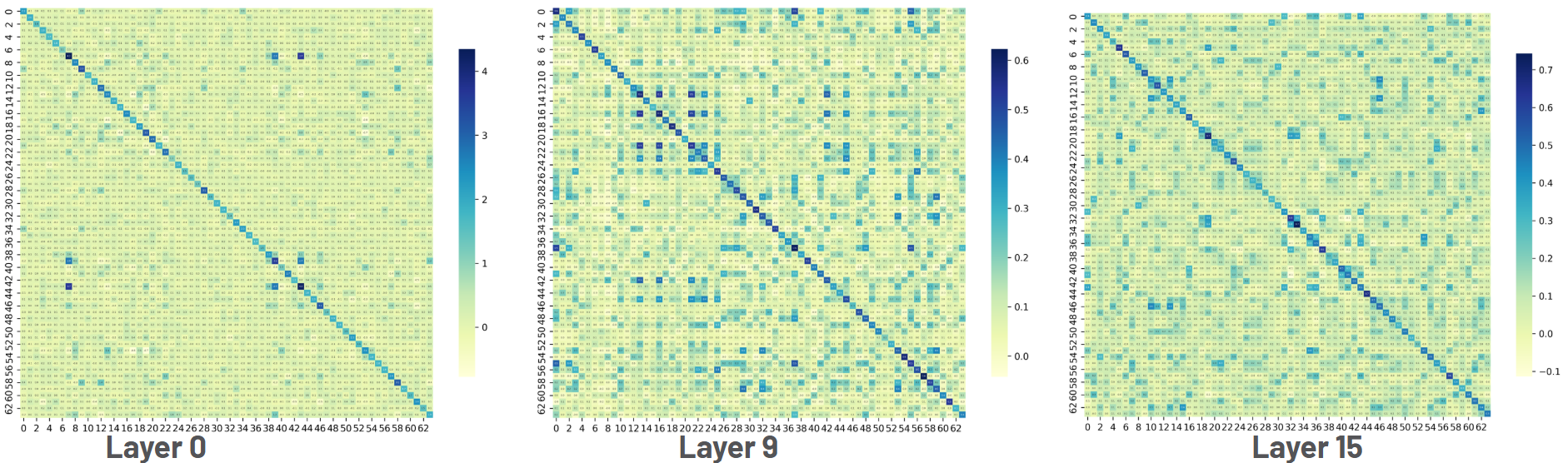}
        \caption{Correlation between expert embeddings within the SMoE router of the OLMoE model.}
        \label{fig:ren8}
    \end{subfigure}
    
    
    \begin{subfigure}{\linewidth}
        \centering
        \includegraphics[width=\linewidth]{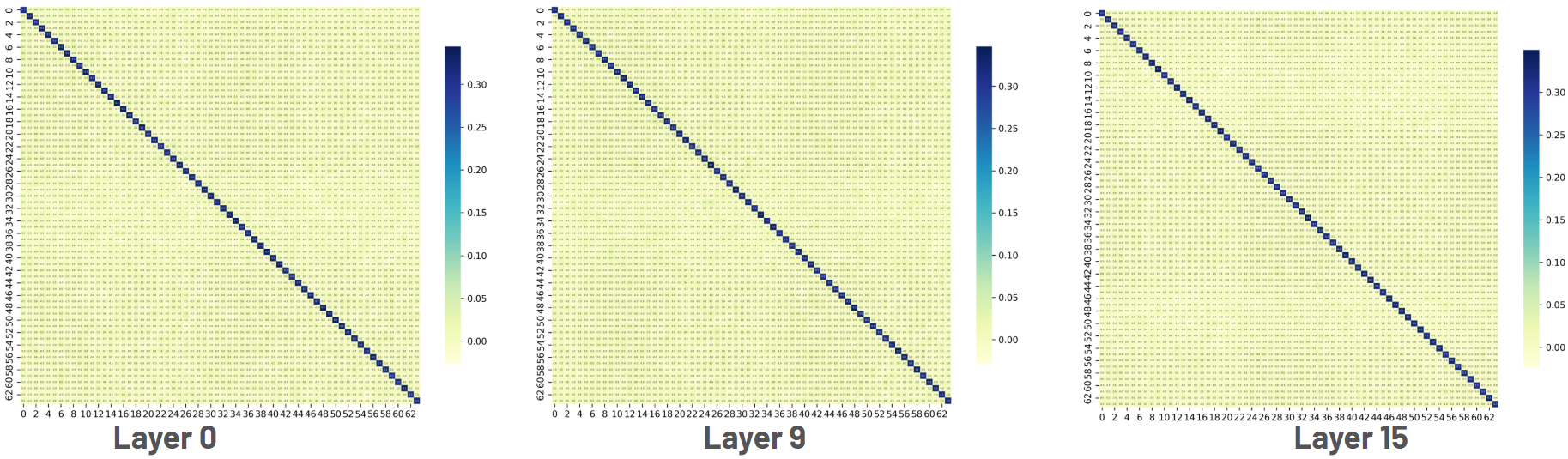}
        \caption{Correlation between expert weight eigenvectors (EV router).}
        \label{fig:rtx8}
    \end{subfigure}
    
    \caption{A comparison of collapse behavior shows that the OLMoE router exhibits increased collapse at deeper layers, while the EV router does not encounter this issue.}
    \label{fig:router}
    \vspace{-0.05in}
\end{figure}

\textbf{Latent Structure Discovery.}\;  Figure ~\ref{fig:cluster} presents t-SNE projections of two learned representations, colored by KMeans cluster assignments ($k=64$). Visually, Representation 1 exhibits more coherent and well-separated clusters, while SMoE appears more dispersed with overlapping regions. This observation is quantitatively supported by clustering metrics: SSMoE achieves a higher Silhouette score~\citep{ROUSSEEUW198753} and a lower Davies-Bouldin score~\citep{4766909}, indicating stronger intrinsic cluster structure. These results suggest that SSMoE better preserves semantically meaningful groupings in the latent space.

\begin{figure}[!ht] 
    \centering
    \includegraphics[width=\linewidth]{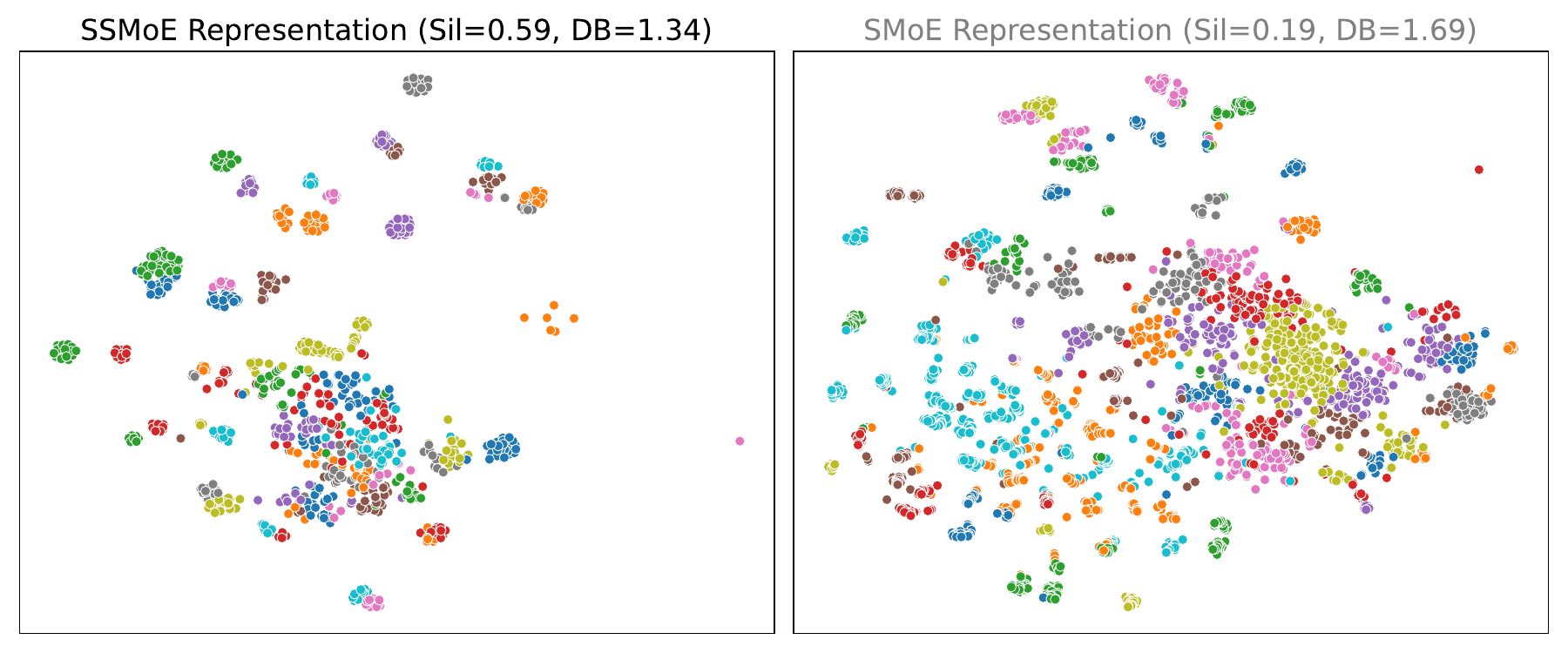}
    \caption{t-SNE visualizations of two learned representations, colored by KMeans cluster assignments with $k=64$. SSMoE (left) displays more compact and well-separated clusters than SMoE (right), indicating stronger underlying cluster structure. Clustering quality is further supported by quantitative metrics: higher Silhouette and lower Davies-Bouldin index for SSMoE.}
    \label{fig:cluster}
    \vspace{-0.05in}
\end{figure}

\vspace*{-0.1in}
\section{Conclusion}
\vspace*{-0.05in}
In this research, we investigate the behavior of routers in advanced Sparse Mixture of Experts (SMoE) models and highlight a common issue known as representation collapse. Building on this analysis, we propose \textit{SVD-SMoE (SSMoE)}, a novel and efficient framework that enhances SMoE by leveraging \textit{Eigenvalue Representations of Expert Weights}. SSMoE learns more robust representations and effectively addresses the representation collapse problem often encountered in traditional SMoEs. Experimental results across both training-free and fine-tuning settings show that SSMoE achieves more efficient and effective fine-tuning and inference than existing advanced SMoE models. We hope this work opens new directions for designing effective SMoE architectures without relying on conventional routers.

\section*{Impact Statement}

Our work aims to advance Machine Learning through transparent, reproducible research using publicly available resources. While the immediate scope of this study does not involve human participants, we remain mindful of the broader implications of LLM deployment. Specifically, models trained on uncurated web data are susceptible to systemic biases that require rigorous, continuous mitigation efforts. Additionally, we acknowledge that the high computational demands of LLM training pose sustainability challenges. This paper serves as a technical contribution while advocating for more resource-efficient and ethically-aware AI development.

\nocite{langley00}

\bibliography{example_paper}
\bibliographystyle{icml2026}

\newpage
\appendix
\onecolumn
\section{Supplementary Material for ``Eigenvectors of Experts are Training-free Non-collapsing Routers''}




The appendix is organized as follows: Section~\ref{sec:theoproof} presents the additional  theoretical analysis, while Section~\ref{sec:detail} provides algorithm of the SSMoE. Supplementary experimental results are reported in Section~\ref{app:aresult}, and full implementation details are described in Section~\ref{app:imp}.

\section{Additional Theoretical Analysis} 
\label{sec:theoproof}

\begin{lemma}[Approximate orthogonality via Gaussian representation]
\label{lem:orth_simple}
Let $\widetilde{\mathcal C}^{(1)},\dots,\widetilde{\mathcal C}^{(N)}\in\mathbb{R}^d$ be
spectral embeddings of $N$ experts. Assume that for each $i$ the embedding
$\widetilde{\mathcal C}^{(i)}$ is distributed approximately as a random
unit vector~\citep{Thamm_2022,staats2026small} drawn uniformly from the sphere $S^{d-1}$, and that different
experts' embeddings are independent. Then for any fixed distinct indices
$i\neq j$ and for every $\varepsilon>0$,
\[
\mathbb{P}\big(|\widetilde{\mathcal C}^{(i)\top}\widetilde{\mathcal C}^{(j)}|\ge\varepsilon\big)
\longrightarrow 0
\qquad\text{as } d\to\infty.
\]
In particular, $\widetilde{\mathcal C}^{(i)\top}\widetilde{\mathcal C}^{(j)}\xrightarrow{p}0$,
so different expert embeddings are (asymptotically) nearly orthogonal.
\end{lemma}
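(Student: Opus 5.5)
The plan is to condition on one embedding and use rotational invariance to reduce the inner product to a single coordinate of a uniform random unit vector, then control that coordinate with a second-moment (Chebyshev) bound. Fix $i\neq j$. By independence, we first condition on $\widetilde{\mathcal C}^{(i)}=u$ for an arbitrary fixed unit vector $u\in S^{d-1}$. Since the uniform measure on $S^{d-1}$ is invariant under orthogonal transformations, choose an orthogonal $Q$ with $Qu=e_1$. Then
\[
u^\top\widetilde{\mathcal C}^{(j)}=(Qu)^\top(Q\widetilde{\mathcal C}^{(j)})\stackrel{d}{=}e_1^\top\widetilde{\mathcal C}^{(j)}=Z_1,
\]
where $Z=(Z_1,\dots,Z_d)$ is uniform on $S^{d-1}$. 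Hence the conditional law of the inner product does not depend on $u$, and it suffices to bound $\mathbb{P}(|Z_1|\ge\varepsilon)$ uniformly in $u$. Integrating over the law of $\widetilde{\mathcal C}^{(i)}$ then transfers the bound to the unconditional probability.

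Next we compute the second moment. By exchangeability of the coordinates of $Z$ and the constraint $\sum_{k=1}^d Z_k^2=1$, we get $\mathbb{E}[Z_1^2]=1/d$. Chebyshev's inequality then gives
\[
\mathbb{P}\big(|\widetilde{\mathcal C}^{(i)\top}\widetilde{\mathcal C}^{(j)}|\ge\varepsilon\big)=\mathbb{P}(|Z_1|\ge\varepsilon)\le\frac{1}{d\varepsilon^2}\longrightarrow0 \quad\text{as } d\to\infty,
\]
which is exactly convergence in probability to $0$. If a sharper rate is wanted, one can instead represent $Z=g/\|g\|$ with $g\sim\mathcal N(0,I_d)$, or invoke concentration of measure on the sphere (L\'evy's lemma), to obtain $\mathbb{P}(|Z_1|\ge\varepsilon)\le 2e^{-d\varepsilon^2/2}$. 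This stronger bound is not needed for the statement.

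The main obstacle is interpretive rather than technical. The hypothesis only says that the embeddings are \emph{approximately} uniform, with the citations to \citet{Thamm_2022,staats2026small}. We will take this literally, as exact uniformity and exact independence, and prove the exact statement. We will then remark that the conclusion persists if each embedding's law is within total variation distance $\delta_d\to0$ of uniform, since the probability of the event changes by at most $2\delta_d$. The only other delicate point is justifying the conditioning step, namely that independence lets us fix $u$ and that rotational invariance holds for every fixed $u$. Both are standard.
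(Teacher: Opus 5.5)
Your proof is correct, and it takes a different route from the paper's.

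\textbf{The paper's route.} It writes both embeddings as normalized Gaussians $X/\|X\|$ and $Y/\|Y\|$. It then uses the strong law of large numbers for $\|X\|/\sqrt d\to 1$ and $\|Y\|/\sqrt d\to 1$, and the central limit theorem for $\langle X,Y\rangle/\sqrt d\Rightarrow\mathcal N(0,1)$. Combining these (Slutsky) gives $\sqrt d\,\langle U,V\rangle\Rightarrow\mathcal N(0,1)$, from which the probability bound follows asymptotically.

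\textbf{Your route.} You condition on one embedding and use rotational invariance to reduce to a single coordinate $Z_1$. You then finish with the exact identity $\mathbb E[Z_1^2]=1/d$ and Chebyshev's inequality.

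\textbf{What your argument buys.}
\begin{itemize}
\item It gives a non-asymptotic bound $1/(d\varepsilon^2)$ valid for every $d$.
\item It needs only that one embedding be uniform and independent of the other; the other may have any law on the sphere.
\item It sidesteps the limit bookkeeping in the paper's last step. Applying the strong law across dimensions implicitly nests the Gaussian vectors in one infinite sequence. Turning convergence in distribution into the displayed bound at the growing threshold $\varepsilon\sqrt d/2$ also implicitly uses tightness or uniform convergence of distribution functions.
\end{itemize}

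\textbf{What the paper's argument buys.} It identifies the Gaussian shape of the rescaled inner product $\sqrt d\,\langle U,V\rangle$, not just its scale $1/\sqrt d$. Your second-moment computation pins down that scale exactly but gives only a tail bound.

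Your side remarks are also correct and go beyond what the paper proves. These are the cap estimate $2e^{-d\varepsilon^2/2}$, and the observation that total-variation perturbations of size $\delta_d\to 0$ change the probability by at most $2\delta_d$. The latter gives a precise meaning to the lemma's informal ``approximately uniform'' hypothesis, which the paper leaves unaddressed.
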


Lemma~\ref{lem:orth_simple} establishes that the eigenvectors of expert weight matrices provide more effective routing signals than standard MoE routers, owing to their orthogonality properties. This theoretical insight is corroborated by the empirical results in Figure~\ref{fig:router}, which show that eigenvector-based routers exhibit near-orthogonality, whereas traditional routers suffer from high correlations. 

\subsection{Proof for Lemma ~\ref{lem:orth_simple}}

\begin{proof}
Represent independent standard normal vectors $X,Y\sim\mathcal N(0,I_d)$.
A uniformly random unit vector on $S^{d-1}$ has the same distribution as
$U:=X/\|X\|$; similarly set $V:=Y/\|Y\|$. Thus for two independent uniform
unit vectors we may write their inner product as
\[
\langle U,V\rangle=\frac{\langle X,Y\rangle}{\|X\|\,\|Y\|}.
\]

By the strong law of large numbers,
\[
\frac{\|X\|}{\sqrt{d}}=\sqrt{\frac{1}{d}\sum_{k=1}^d X_k^2}\xrightarrow{\mathrm{a.s.}}1,
\qquad
\frac{\|Y\|}{\sqrt{d}}\xrightarrow{\mathrm{a.s.}}1.
\]
Moreover, $\langle X,Y\rangle=\sum_{k=1}^d X_k Y_k$ is a sum of i.i.d.\ mean-zero
terms with variance $1$, so by the central limit theorem
\[
\frac{\langle X,Y\rangle}{\sqrt{d}}\ \overset{d}{\longrightarrow}\ \mathcal N(0,1).
\]
Combining these two facts yields
\[
\sqrt{d}\,\langle U,V\rangle
= \frac{\langle X,Y\rangle/\sqrt{d}}{(\|X\|/\sqrt{d})(\|Y\|/\sqrt{d})}
\ \overset{d}{\longrightarrow}\ \mathcal N(0,1),
\]
hence for every fixed $\varepsilon>0$,
\[
\mathbb{P}\big(|\langle U,V\rangle|\ge\varepsilon\big)
= \mathbb{P}\Big(\sqrt{d}\,|\langle U,V\rangle|\ge\varepsilon\sqrt{d}\Big)
\le \mathbb{P}\Big(|\mathcal N(0,1)|\ge\varepsilon\sqrt{d}/2\Big) + o(1)
\longrightarrow 0.
\]
Therefore $\langle U,V\rangle\xrightarrow{p}0$ as $d\to\infty$.

The lemma follows by taking $U=\widetilde{\mathcal C}^{(i)}$ and
$V=\widetilde{\mathcal C}^{(j)}$ under the stated Random matrix theory (RMT) assumption that the
spectral embeddings behave like independent uniform unit vectors. The same
argument extends to any fixed finite collection of embeddings: all pairwise
inner products vanish in probability as $d\to\infty$, so the collection is
asymptotically orthogonal.
\end{proof}

\subsection{Proof for Lemma ~\ref{lem:mix_reduces_corr_v}}
\label{proof:lem_reduces}

\begin{proof}
Expand the product for $i\neq j$ and take expectation:
\[
\mathbb{E}[\mathcal{V}_i \mathcal{V}_j]
= \alpha^2 \mathbb{E}[f_{\mathrm{EV},i} f_{\mathrm{EV},j}]
+ (1-\alpha)^2 \mathbb{E}[s_i s_j]
+ \alpha(1-\alpha)\big(\mathbb{E}[f_{\mathrm{EV},i} s_j] + \mathbb{E}[f_{\mathrm{EV},j} s_i]\big).
\]
By assumption (1) the first term is zero, and by assumption (2) the cross terms
are zero. The remaining term is $(1-\alpha)^2 \mathbb{E}[s_i s_j]$, which
proves the identity. The inequality and the monotonic dependence on $\alpha$
follow immediately because $(1-\alpha)^2<1$ for $\alpha\in(0,1]$.
\end{proof}



\paragraph{Extension to selective SSMoE.}
The proof above corresponds to the router-independent case, such as the simple-average
EV variant. For the router-guided selective variant, the learned router is used to select
a subset of eigenvector directions. This selection can introduce mild dependence between
$f_{\mathrm{EV}}(x)$ and $s(x)$, so the exact decorrelation condition
$\mathbb{E}[f_{\mathrm{EV},i}s_j]=0$ may not hold. We therefore consider the relaxed
condition
\[
\left|\mathbb{E}[f_{\mathrm{EV},i}s_j]\right|\leq \varepsilon,
\qquad \forall i,j,
\]
for some small $\varepsilon\geq 0$.

Under this relaxed condition, using the same expansion as above gives
\[
\mathbb{E}[\mathcal{V}_i \mathcal{V}_j]
= \alpha^2 \mathbb{E}[f_{\mathrm{EV},i} f_{\mathrm{EV},j}]
+ (1-\alpha)^2 \mathbb{E}[s_i s_j]
+ \alpha(1-\alpha)\big(\mathbb{E}[f_{\mathrm{EV},i} s_j]
+ \mathbb{E}[f_{\mathrm{EV},j} s_i]\big).
\]
By EV orthogonality,
\[
\mathbb{E}[f_{\mathrm{EV},i} f_{\mathrm{EV},j}]=0,
\qquad i\neq j.
\]
Taking absolute values and applying the triangle inequality, we obtain
\[
\begin{aligned}
\left|\mathbb{E}[\mathcal{V}_i \mathcal{V}_j]\right|
&\leq
(1-\alpha)^2
\left|\mathbb{E}[s_i s_j]\right|
+
\alpha(1-\alpha)
\left(
\left|\mathbb{E}[f_{\mathrm{EV},i} s_j]\right|
+
\left|\mathbb{E}[f_{\mathrm{EV},j} s_i]\right|
\right) \\
&\leq
(1-\alpha)^2
\left|\mathbb{E}[s_i s_j]\right|
+
2\alpha(1-\alpha)\varepsilon .
\end{aligned}
\]
Thus, selective SSMoE reduces pairwise router-logit correlation by the same
$(1-\alpha)^2$ factor as in Lemma~2, up to an additional cross-correlation error term
$2\alpha(1-\alpha)\varepsilon$. When $\varepsilon$ is small, the selective variant
preserves the decorrelation effect while allowing the learned router to filter more
relevant eigenvector directions.

\section{SSMoE Algorithm} \label{sec:detail}

\begin{algorithm}[t]
\caption{SSMoE Layer with Eigenvector-based Routing}
\label{alg:ssmoe}
\begin{algorithmic}
\STATE \textbf{Input:} Tokens $X \in \mathbb{R}^{B \times L \times D}$; experts $\{E_i\}_{i=1}^{N}$;
learned router logits $\mathcal{S}(X) \in \mathbb{R}^{B \times L \times N}$;
FFN weights $W_i^{(1)}, W_i^{(2)}$;
router embeddings $\{r_i\}_{i=1}^{N}$; Top-$c$, Top-$k$, mixing factor $\alpha \in [0,1]$.
\STATE \textbf{Output:} $Y \in \mathbb{R}^{B \times L \times D}$.

\STATE \textbf{Pre-compute eigenvector descriptors.}
\FOR{$i = 1, \dots, N$}
    \STATE $A_i \gets W_i^{(1)}(W_i^{(1)})^\top$;\quad $B_i \gets (W_i^{(2)})^\top W_i^{(2)}$
    \STATE $\{v_j^{(i,1)}\} \gets \mathrm{EigVec}(A_i)$;\quad $\{v_j^{(i,2)}\} \gets \mathrm{EigVec}(B_i)$
    \STATE $\mathcal{C}_1^{(i)} \gets \mathrm{TopC}(\{v_j^{(i,1)}\}, r_i, c)$
    \STATE $\mathcal{C}_2^{(i)} \gets \mathrm{TopC}(\{v_j^{(i,2)}\}, r_i, c)$
    \STATE $C_i \gets \tfrac{1}{2}\!\left[\mathrm{mean}(\mathcal{C}_1^{(i)}) + \mathrm{mean}(\mathcal{C}_2^{(i)})\right]$
\ENDFOR

\STATE \textbf{Build eigenvector router.}
\STATE $W_{\mathrm{EV}} \gets [C_1, \ldots, C_N] \in \mathbb{R}^{D \times N}$

\STATE \textbf{Compute routing probabilities.}
\STATE $F_{\mathrm{EV}}(X) \gets X W_{\mathrm{EV}}$
\STATE $P_{\mathrm{EV}}(X) \gets \mathrm{softmax}(F_{\mathrm{EV}}(X))$
\STATE $P_{\mathrm{R}}(X) \gets \mathrm{softmax}(\mathcal{S}(X))$
\STATE $P(X) \gets \alpha P_{\mathrm{EV}}(X) + (1-\alpha)P_{\mathrm{R}}(X)$

\STATE \textbf{Select experts and compute output.}
\FORALL{tokens $x_{b,\ell} \in X$}
    \STATE $\mathcal{I}_{b,\ell} \gets \mathrm{TopK}(P(x_{b,\ell}), k)$
    \STATE $y_{b,\ell} \gets \sum_{i \in \mathcal{I}_{b,\ell}} P_i(x_{b,\ell}) E_i(x_{b,\ell})$
\ENDFOR
\STATE \textbf{return} $Y = \{y_{b,\ell}\}$
\end{algorithmic}
\end{algorithm}

Algorithm~\ref{alg:ssmoe} summarizes the SSMoE layer. Importantly, eigenvectors are extracted from the expert Gram matrices rather than from the rectangular FFN weights directly, ensuring that the resulting spectral descriptors lie in the token representation space $\mathbb{R}^{D}$ and can therefore be used as routing projections.

\section{Additional Experiments}
\label{app:aresult}

\subsection{Vision Tasks}

For image-text retrieval, we conduct evaluations on the COCO~\citep{10.1007/978-3-319-10602-1_48} and Flickr30k~\citep{7410660} datasets. For zero-shot image classification, we assess performance on six widely used benchmarks: CIFAR-10, CIFAR-100~\citep{krizhevsky2009learning}, STL-10~\citep{pmlr-v15-coates11a}, Caltech101~\citep{FeiFei2004LearningGV}, ImageNet-1K~\citep{5206848}, and ImageNet-O~\citep{hendrycks2021naturaladversarialexamples}.

\begin{table}[!ht]
\centering

\begin{tabular}{lccc}
\toprule
\rowcolor{headergray}\multicolumn{4}{c}{\textbf{Clean}} \\
\midrule
\textbf{Dataset}     & \textbf{CLIP-MoE} & \textbf{SSMoE} & \textbf{$\Delta$} \\
CIFAR-10             & 95.7              & \textbf{96.0}  & \textbf{+0.3} \\
CIFAR-100            & 79.6              & \textbf{79.9}  & \textbf{+0.3} \\
STL-10               & 99.2              & \textbf{99.3}  & \textbf{+0.1} \\
Caltech101           & 83.6              & \textbf{84.7}  & \textbf{+1.1} \\
ImageNet-1K          & 74.6              & \textbf{74.9}  & \textbf{+0.3} \\
ImageNet-O           & 33.5              & \textbf{33.7}  & \textbf{+0.2} \\
\rowcolor{lightblue}\textbf{Avg.}        & 77.7              & \textbf{78.1}  & \textbf{+0.4} \\
\midrule
\rowcolor{headergray}\multicolumn{4}{c}{\textbf{Corrupt}} \\
\midrule
\textbf{Dataset}     & \textbf{CLIP-MoE} & \textbf{SSMoE} & \textbf{$\Delta$} \\
CIFAR-10             & 66.7              & \textbf{66.9}  & \textbf{+0.2} \\
CIFAR-100            & 34.4              & \textbf{34.9}  & \textbf{+0.5} \\
STL-10               & 85.8              & \textbf{86.1}  & \textbf{+0.3} \\
Caltech101           & 76.0              & \textbf{76.2}  & \textbf{+0.2} \\
ImageNet-1K          & 46.6              & \textbf{46.9}  & \textbf{+0.3} \\
ImageNet-O           & 31.0              & \textbf{31.7}  & \textbf{+0.7} \\
\rowcolor{lightblue}\textbf{Avg.}        & 56.8              & \textbf{57.1}  & \textbf{+0.4} \\
\bottomrule
\end{tabular}
\caption{Classification accuracy (\%) of CLIP-MoE and SSMoE on multiple datasets. The "+/-" row indicates the improvement of SSMoE over CLIP-MoE. Best results per row are in \textbf{bold}.}
\label{tab:classification_grouped}
\end{table}

\textbf{Zero-Shot Image Classification Evaluation.}\; Table~\ref{tab:classification_grouped} presents the zero-shot image classification accuracy of CLIP-MoE and our proposed SSMoE across six widely used benchmarks under both clean and corrupted image conditions. Under the clean setting, SSMoE consistently outperforms CLIP-MoE on all datasets, achieving an average accuracy improvement of \textbf{+0.4\%}. Notably, the largest gain is observed on Caltech101, where SSMoE improves accuracy by \textbf{+1.1\%}.

In the corrupted setting, where input images are perturbed with random noise, SSMoE maintains its advantage, again outperforming CLIP-MoE across all benchmarks with an average improvement of \textbf{+0.4\%}. The most significant gain is on ImageNet-O: \textbf{+0.7\%}, which is known for its challenging out-of-distribution samples. These results demonstrate that SSMoE not only enhances performance in standard scenarios but also exhibits greater robustness under distributional shifts and noise.

\subsection{LLMs Finetuning Tasks}

We showcase the effectiveness of our method through supervised fine-tuning on OLMoE-7B~\citep{muennighoff2025olmoe}.For training, we adopt Supervised Fine-Tuning (SFT) using GaLore~\citep{pmlr-v235-zhao24s}, which enables full-parameter optimization with improved memory efficiency compared to typical low-rank adaptation methods such as LoRA. The fine-tuning is conducted on the Alpaca dataset~\citep{alpaca} for 2,000 steps, with evaluation performed every 10 steps.

\textbf{Reasoning Evaluation.}\; We evaluate the performance of OLMoE on six key reasoning benchmarks. The ARC-C and ARC-E benchmarks~\citep{clark2018thinksolvedquestionanswering} assess scientific question answering at varying difficulty levels. BoolQ~\citep{clark-etal-2019-boolq} evaluates reading comprehension through yes/no questions. PIQA~\citep{DBLP:conf/aaai/BiskZLGC20} measures a model’s ability to perform physical commonsense reasoning by selecting the most plausible outcome in everyday situations. MMLU~\citep{HendrycksBBZMSS21} tests broad general knowledge and reasoning capabilities, while WinoGrande~\citep{10.1145/3474381} targets coreference resolution through challenging pronoun disambiguation tasks.

\begin{table}[!ht]
\centering
\renewcommand{\arraystretch}{1.3}
\footnotesize

\label{table:moe_comparison_halfwidth}
\resizebox{.48\textwidth}{!}{%
\begin{tabular}{llcccccc}
\toprule
\rowcolor{headergray}
\textbf{Tasks} & \textbf{Method} & \textbf{ARC-C} & \textbf{ARC-E} & \textbf{BoolQ} & \textbf{Avg.} \\
\midrule
\multirow{3}{*}{OLMoE-7B}
& SMoE   & \textbf{52.7} & 78.2 & 76.4 & 69.1 \\
& \hl SSMoE  & \hl 52.3 & \hl\textbf{80.9} & \hl\textbf{77.5} & \hl\textbf{70.2} \\
& $\Delta$ & -0.4 & +2.7 & +1.1 & +1.1 \\
\midrule
\rowcolor{headergray}
\textbf{Tasks} & \textbf{Method} & \textbf{PiQA} & \textbf{MMLU} & \textbf{WinoGrande} & \textbf{Avg.} \\
\midrule
\multirow{3}{*}{OLMoE-7B}
& SMoE   & 81.2 & 50.8 & 69.9 & 67.3 \\
& \hl SSMoE  & \hl\textbf{82.5} & \hl\textbf{51.0} & \hl\textbf{71.2} & \hl\textbf{68.2} \\
& $\Delta$ & +1.3 & +0.2 & +1.3 & +0.9 \\
\bottomrule
\end{tabular}
}
\caption{Performance comparison between SSMoE and the SMoE baseline on reasoning tasks. Higher is better, best results per row are in \textbf{bold}.}
\label{table:reason}
\end{table}

Overall, SSMoE outperforms the SMoE baseline on six benchmarks for the OLMoE model as Table~\ref{table:reason}. Notably, on ARC-E, PIQA, and WinoGrande, SSMoE achieves gains of \textbf{1–3\%} over the baselines. These results suggest that SSMoE is not only effective in training-free settings for downstream tasks, but also excels at handling complex reasoning tasks.

\subsection{LLMs Embeding Evaluation}

We present a comprehensive evaluation of three state-of-the-art SMoE models: \textbf{OLMoE-7B} (Table~\ref{tab:olmoe_results}), \textbf{Qwen-MoE-7B} (Table~\ref{tab:qwen_moe_results}), and \textbf{DeepSeekMoE-16B} (Table~\ref{tab:deepseekmoe_results}). Our results highlight the effectiveness of our method across different models and prompting conditions, demonstrating consistent improvements over baseline approaches such as the SMoE.

\begin{table*}[ht]
\centering
\resizebox{\textwidth}{!}{%
\begin{tabular}{lllc ccccc}
\toprule
\textbf{Category} & \textbf{Model} & \textbf{Dataset} & \textbf{Setting} & \textbf{Router} & \textbf{SMoE} & \textbf{MoEE} & \textbf{EV} & \textbf{SSMoE} \\
\midrule
Classification & OLMoE-7B & Emotion & None & 24.1 & 24.5 & 25.1 & 32.3 & \textbf{37.4} \\
& & & Prompt & 27.6 & 49.9 & 44.5 & 29.9 & \textbf{53.5} \\
& & Toxic & None & 51.9 & 58.9 & 51.9 & 52.5 & \textbf{60.8} \\
& & & Prompt & 52.3 & 65.2 & 53.4 & 52.1 & \textbf{67.4} \\
& & Tweet & None & 47.7 & 46.8 & 48.4 & 53.0 & \textbf{53.5} \\
& & & Prompt & 49.5 & 58.0 & 57.2 & 48.7 & \textbf{60.9} \\
\midrule
Clustering & OLMoE-7B & Medrxiv & None & 15.0 & 17.6 & 17.4 & 15.5 & \textbf{19.0} \\
& & & Prompt & 15.8 & 23.9 & 22.0 & 14.7 & \textbf{27.5} \\
& & 20Groups & None & 12.4 & 11.8 & 11.5 & 12.9 & \textbf{13.7} \\
& & & Prompt & 16.7 & 25.7 & 24.4 & 12.5 & \textbf{32.7} \\
\midrule
Pair Classification & OLMoE-7B & SemEval & None & 43.6 & 35.8 & 43.6 & \textbf{50.7} & 44.8 \\
& & & Prompt & 45.7 & 46.7 & 53.8 & 47.2 & \textbf{55.7} \\
& & URLCorpus & None & 47.0 & 42.4 & 47.8 & \textbf{67.7} & 57.5 \\
& & & Prompt & 61.4 & 77.4 & 78.2 & 72.4 & \textbf{80.5} \\
\midrule
Reranking & OLMoE-7B & Ask & None & 41.3 & 41.0 & 41.4 & \textbf{43.4} & 42.2 \\
& & & Prompt & 43.4 & \textbf{51.9} & 50.2 & 44.0 & 51.5 \\
& & SciDocs & None & 45.5 & 46.3 & 50.8 & 49.6 & \textbf{57.4} \\
& & & Prompt & 53.6 & 69.6 & 75.1 & 50.5 & \textbf{76.8} \\
& & StackOver & None & 25.8 & 24.8 & 26.4 & \textbf{30.5} & 27.5 \\
& & & Prompt & 28.1 & 32.5 & 34.3 & 30.9 & \textbf{34.5} \\
\midrule
STS & OLMoE-7B & Biosses & None & 39.3 & 13.6 & 29.7 & \textbf{65.2} & 36.1 \\
& & & Prompt & 51.2 & 61.8 & 70.2 & 53.8 & \textbf{75.2} \\
& & SickR & None & 50.3 & 46.3 & 53.0 & \textbf{55.7} & 55.6 \\
& & & Prompt & 51.9 & 65.7 & 66.1 & 49.0 & \textbf{66.8} \\
& & STS12 & None & 40.1 & 8.6 & 37.8 & \textbf{57.2} & 46.7 \\
& & & Prompt & 51.3 & 53.8 & 63.6 & 49.7 & \textbf{65.7} \\
& & STS13 & None & 40.5 & 21.1 & 43.4 & \textbf{56.1} & 52.8 \\
& & & Prompt & 52.5 & 66.5 & 72.7 & 53.9 & \textbf{75.5} \\
& & STS14 & None & 29.5 & 13.4 & 31.7 & \textbf{46.4} & 40.3 \\
& & & Prompt & 41.1 & 56.8 & 64.2 & 42.9 & \textbf{66.6} \\
& & STS15 & None & 30.8 & 27.8 & 33.3 & \textbf{51.6} & 43.1 \\
& & & Prompt & 46.4 & \textbf{69.3} & 66.4 & 47.4 & 67.1 \\
& & STS16 & None & 46.5 & 38.9 & 45.8 & \textbf{57.5} & 52.1 \\
& & & Prompt & 52.4 & \textbf{70.1} & 68.3 & 51.0 & \textbf{69.5} \\
& & STSBen & None & 42.2 & 23.4 & 44.5 & \textbf{52.3} & 49.5 \\
& & & Prompt & 48.6 & 63.6 & 70.7 & 41.8 & \textbf{71.8} \\
\midrule
Summarization & OLMoE-7B & Medrxiv & None & 28.4 & 20.9 & 29.8 & 30.0 & \textbf{31.0} \\
& & & Prompt & 25.6 & 28.9 & \textbf{30.4} & 27.0 & 30.0 \\
\bottomrule
\end{tabular}%
}
\caption{Performance comparison of Router, SMoE, MoEE, SSMoE, EV, and SSMoE across MTEB Tasks with \textit{OLMoE-7B} models. The best result for each row is highlighted in \textbf{bold}.}
\label{tab:olmoe_results}
\end{table*}

\begin{table*}[ht]
\centering
\resizebox{\textwidth}{!}{%
\begin{tabular}{lllc ccccc}
\toprule
\textbf{Category} & \textbf{Model} & \textbf{Dataset} & \textbf{Setting} & \textbf{Router} & \textbf{SMoE} & \textbf{MoEE} & \textbf{EV} & \textbf{SSMoE} \\
\midrule
Classification & Qwen-MoE-7B & Emotion & None & 27.2 & 33.9 & 34.3 & 24.4 & \textbf{35.3} \\
& & & Prompt & 37.0 & 48.5 & 47.2 & 38.6 & \textbf{51.0} \\
& & Toxic & None & 53.0 & 61.1  & 52.9 & 52.9 & \textbf{61.9} \\
& & & Prompt & 53.4 & 64.5 & 54.1 & 53.8 & \textbf{65.3} \\
& & Tweet & None & 51.1 & 55.9 & 55.9 & 52.9 & \textbf{57.6} \\
& & & Prompt & 56.1 & 61.1 & 60.7 & 58.8 & \textbf{62.3} \\
\midrule
Clustering & Qwen-MoE-7B & Medrxiv & None & 15.3 & 23.3 & 23.0 & 15.2 & \textbf{26.0} \\
& & & Prompt & 14.2 & 24.6 & 21.8 & 15.3 & \textbf{30.0} \\
& & 20Groups & None & 12.0 & \textbf{31.5} & 27.4 & 12.2 & 36.1 \\
& & & Prompt & 14.4 & 43.8 & 38.4 & 17.3 & \textbf{49.1} \\
\midrule
Pair Classification & Qwen-MoE-7B & SemEval & None & 42.0 & 38.8 & 42.5 & 43.1 & \textbf{46.6} \\
& & & Prompt & 47.0 & 52.4 & 52.4 & 49.9 & \textbf{56.2} \\
& & URLCorpus & None & 49.8 & 54.9 & 60.6 & 50.8 & \textbf{64.1} \\
& & & Prompt & 56.7 & 68.7 & 68.2 & 61.2 & \textbf{74.3} \\
\midrule
Reranking & Qwen-MoE-7B & Ask & None & 43.1 & 45.8 & 47.3 & 43.1 & \textbf{48.8} \\
& & & Prompt & 43.3 & 48.3 & 49.5 & 44.4 & \textbf{51.4} \\
& & SciDocs & None & 49.6 & 60.6 & \textbf{67.0} & 50.9 & 66.0 \\
& & & Prompt & 50.9 & 60.1 & 68.7 & 52.2 & \textbf{65.8} \\
& & StackOver & None & 26.2 & 29.5 & 31.1 & 26.9 & \textbf{32.6} \\
& & & Prompt & 28.8 & 31.3 & 35.2 & 29.6 & \textbf{35.4} \\
\midrule
STS & Qwen-MoE-7B & Biosses & None & 33.8 & 32.5 & 49.6 & 40.0 & \textbf{65.6} \\
& & & Prompt & 55.1 & 55.8 & \textbf{68.4} & 58.2 & 67.7 \\
& & SickR & None & 51.0 & 55.5 & 61.0 & 51.2 & \textbf{62.1} \\
& & & Prompt & 50.2 & 59.7 & 64.3 & 52.0 & \textbf{67.2} \\
& & STS12 & None & 40.2 & 16.9 & 46.3 & 41.9 & \textbf{48.6} \\
& & & Prompt & 49.3 & 25.0 & 59.2 & 51.8 & \textbf{61.8} \\
& & STS13 & None & 38.1 & 42.9 & 56.7 & 41.1 & \textbf{64.2} \\
& & & Prompt & 53.3 & 57.5 & 73.4 & 57.2 & \textbf{75.3} \\
& & STS14 & None & 28.1 & 26.5 & 45.4 & 30.8 & \textbf{51.3} \\
& & & Prompt & 40.4 & 38.8 & 60.0 & 44.4 & \textbf{64.4} \\
& & STS15 & None & 34.8 & 40.5 & 46.1 & 37.0 & \textbf{51.7} \\
& & & Prompt & 40.7 & 52.3 & 58.8 & 46.5 & \textbf{64.3} \\
& & STS16 & None & 47.6 & 51.0 & 58.1 & 49.8 & \textbf{62.0} \\
& & & Prompt & 51.6 & 64.2 & 65.7 & 53.3 & \textbf{67.6} \\
& & STSBen & None & 37.0 & 37.7 & 50.9 & 40.6 & \textbf{57.7} \\
& & & Prompt & 45.6 & 47.8 & 64.5 & 50.1 & \textbf{69.3} \\
\midrule
Summarization & Qwen-MoE-7B & Medrxiv & None & 28.3 & 13.4 & \textbf{31.2} & 30.3 & 29.6 \\
& & & Prompt & 27.0 & 23.0 & 27.3 & 26.4 & \textbf{30.7} \\
\bottomrule
\end{tabular}%
}
\caption{Performance comparison of Router, SMoE, MoEE, SSMoE, EV, and SSMoE across MTEB Tasks with \textit{Qwen-MoE-7B} models. The best result for each row is highlighted in \textbf{bold}.}
\label{tab:qwen_moe_results}
\end{table*}

\begin{table*}[ht]
\centering
\resizebox{\textwidth}{!}{%
\begin{tabular}{lllc ccccc}
\toprule
\textbf{Category} & \textbf{Model} & \textbf{Dataset} & \textbf{Setting} & \textbf{Router} & \textbf{SMoE} & \textbf{MoEE} & \textbf{EV} & \textbf{SSMoE} \\
\midrule
Classification & DeepSeekMoE-16B & Emotion & None & 26.1 & 27.4 & 27.6 & 27.1 & \textbf{31.6} \\
& & & Prompt & 37.9 & 48.3 & 46.4 & 38.1 & \textbf{48.9} \\
& & Toxic & None & 53.3 & \textbf{60.4} & 53.1 & 52.7 & 59.0 \\
& & & Prompt & 53.1 & 62.4 & 53.6 & 54.3 & \textbf{67.0} \\
& & Tweet & None & 51.0 & 51.9 & 52.6 & 52.1 & \textbf{55.2} \\
& & & Prompt & 54.9 & 58.4 & 58.9 & 57.1 & \textbf{60.9} \\
\midrule
Clustering & DeepSeekMoE-16B & Medrxiv & None & 15.1 & 23.0 & 22.0 & 15.8 & \textbf{24.3} \\
& & & Prompt & 17.0 & 25.7 & 24.0 & 17.0 & \textbf{27.4} \\
& & 20Groups & None & 11.7 & 13.2 & 13.7 & 12.8 & \textbf{18.9} \\
& & & Prompt & 18.6 & 32.3 & 33.0 & 20.2 & \textbf{35.1} \\
\midrule
Pair Classification & DeepSeekMoE-16B & SemEval & None & 44.6 & 40.2 & 43.5 & 44.3 & \textbf{41.8} \\
& & & Prompt & 48.4 & 47.2 & 51.3 & 48.8 & \textbf{53.4} \\
& & URLCorpus & None & 46.4 & 41.7 & 48.6 & 49.1 & \textbf{60.6} \\
& & & Prompt & 66.5 & 72.4 & 75.4 & 69.7 & \textbf{75.6} \\
\midrule
Reranking & DeepSeekMoE-16B & Ask & None & 41.7 & 41.1 & 42.3 & 41.4 & \textbf{43.1} \\
& & & Prompt & 43.5 & 43.8 & 46.9 & 44.9 & \textbf{50.4} \\
& & SciDocs & None & 48.2 & 50.6 & 57.1 & 49.7 & \textbf{61.7} \\
& & & Prompt & 58.3 & 65.6 & 72.6 & 60.1 & \textbf{72.6} \\
& & StackOver & None & 25.7 & 24.9 & 27.3 & 26.5 & \textbf{30.1} \\
& & & Prompt & 29.7 & 27.6 & 32.3 & 30.6 & \textbf{33.5} \\
\midrule
STS & DeepSeekMoE-16B & Biosses & None & 29.5 & 31.7 & 26.8 & 30.7 & \textbf{54.4} \\
& & & Prompt & 47.0 & 40.1 & 57.6 & 48.8 & \textbf{66.0} \\
& & SickR & None & 50.4 & 47.4 & 53.1 & 52.4 & \textbf{57.8} \\
& & & Prompt & 56.0 & 61.9 & 65.8 & 58.2 & \textbf{66.8} \\
& & STS12 & None & 44.0 & 4.3 & 45.0 & 45.6 & \textbf{46.7} \\
& & & Prompt & 57.8 & 31.0 & 64.0 & 60.1 & \textbf{64.7} \\
& & STS13 & None & 36.0 & 28.4 & 41.1 & 38.0 & \textbf{50.0} \\
& & & Prompt & 55.3 & 56.0 & 70.9 & 60.8 & \textbf{74.3} \\
& & STS14 & None & 25.4 & 12.0 & 28.2 & 28.3 & \textbf{40.3} \\
& & & Prompt & 44.9 & 41.0 & 58.6 & 49.8 & \textbf{64.0} \\
& & STS15 & None & 34.8 & 33.9 & 38.7 & 35.8 & \textbf{44.9} \\
& & & Prompt & 49.7 & 46.5 & 58.5 & 54.3 & \textbf{64.1} \\
& & STS16 & None & 44.9 & 34.4 & 46.9 & 49.1 & \textbf{55.1} \\
& & & Prompt & 56.7 & 58.0 & 64.5 & 61.9 & \textbf{67.4} \\
& & STSBen & None & 36.6 & 18.3 & 42.1 & 40.0 & \textbf{51.8} \\
& & & Prompt & 54.9 & 57.7 & 67.8 & 60.0 & \textbf{71.0} \\
\midrule
Summarization & DeepSeekMoE-16B & Medrxiv & None & 24.9 & 22.0 & 24.4 & 28.9 & \textbf{29.9} \\
& & & Prompt & 29.1 & 24.4 & 29.2 & 28.8 & \textbf{29.6} \\
\bottomrule
\end{tabular}%
}
\caption{Performance comparison of Router, SMoE, MoEE, SSMoE, EV, and SSMoE across MTEB Tasks with \textit{DeepSeekMoE-16B} models. The best result for each row is highlighted in \textbf{bold}.}
\label{tab:deepseekmoe_results}
\end{table*}




\subsection{Comparison with Training-free Routing}
\label{app:hash_routing}

We compare SSMoE with two training-free routing baselines on GPT-OSS-20B under the 10-shot setting: hash-based routing and random routing. Hash-based routing assigns tokens to experts using a fixed token-dependent hash function, while random routing samples experts without using token--expert compatibility. As shown in Table~\ref{tab:hash_routing}, SSMoE consistently improves over both baselines on most tasks and achieves the best average performance. In particular, SSMoE improves the average score from $36.1$ to $54.1$ over hash-based routing, corresponding to an absolute gain of $+18.0$ points. Compared with random routing, SSMoE improves the average score from $45.0$ to $54.1$, yielding an absolute gain of $+9.1$ points. These results suggest that the proposed eigenvector-based routing captures meaningful expert specialization beyond input-agnostic or heuristic expert assignment strategies.

\begin{table}[t]
\centering
\small
\begin{tabular}{lccc}
\toprule
\textbf{Task} & \textbf{Hash Router} & \textbf{Random Router} & \textbf{SSMoE (Ours)} \\
\midrule
ARC-C      & 23.3 & 33.5 & \textbf{52.1} \\
ARC-E      & 36.7 & 65.8 & \textbf{82.4} \\
BoolQ      & 59.3 & 66.5 & \textbf{72.4} \\
GSM8K      & 14.4 & \textbf{33.1} & 30.8 \\
HellaSwag  & 29.2 & 29.7 & \textbf{37.5} \\
OBQA       & \textbf{26.2} & 20.6 & 23.0 \\
PIQA       & 53.2 & 57.2 & \textbf{72.2} \\
WinoGrande & 46.9 & 53.3 & \textbf{62.2} \\
\midrule
\textbf{Average} & 36.1 & 45.0 & \textbf{54.1} \\
\bottomrule
\end{tabular}
\caption{
Comparison with hash-based and random routing on GPT-OSS-20B under the 10-shot setting. SSMoE achieves the best average performance and substantially outperforms hash-based routing.
}
\label{tab:hash_routing}
\end{table}

\subsection{Comparison on Distribution-Sensitive Metrics}
\label{app:metric_comparison}

To further assess whether the gains of SSMoE arise from meaningful improvements rather than uncontrolled changes in model behavior~\cite{NEURIPS2024_e0e95668}, we report additional distribution-sensitive metrics, including answer flips and perplexity. These metrics are less directly optimizable than task accuracy and therefore provide complementary evidence about the stability and reliability of the proposed routing mechanism.

\paragraph{Answer flips.}
We first analyze prediction flips between the original model and SSMoE. In particular, we distinguish harmful flips, where a previously correct prediction becomes incorrect (\textbf{C$\rightarrow$W}), from beneficial flips, where a previously incorrect prediction becomes correct (\textbf{W$\rightarrow$C}). As shown in Table~\ref{tab:flip_analysis}, SSMoE produces more beneficial flips than harmful ones on average. Across eight benchmarks, SSMoE achieves an average \textbf{W$\rightarrow$C} rate of $13.5\%$, compared to a \textbf{C$\rightarrow$W} rate of $9.2\%$, yielding a net accuracy gain of $+5.3\%$. This indicates that the observed improvements are primarily driven by correcting errors rather than by random prediction changes.

\begin{table}[t]
\centering
\small
\begin{tabular}{lcccccc}
\toprule
Task & Original & SSMoE & Gap & FlipRate & C$\rightarrow$W & W$\rightarrow$C \\
\midrule
ARC-C      & 44.2 & 52.1 & +7.9 & 15.7 & 7.3  & 8.5  \\
ARC-E      & 77.9 & 82.4 & +4.5 & 8.7  & 3.4  & 5.3  \\
BoolQ      & 62.8 & 72.4 & +9.6 & 39.2 & 12.6 & 26.7 \\
GSM8K      & 36.8 & 30.8 & -6.0 & 39.6 & 25.6 & 14.0 \\
HellaSwag  & 29.0 & 37.5 & +8.5 & 12.1 & 2.4  & 9.7  \\
OBQA       & 19.8 & 23.0 & +3.2 & 12.0 & 3.4  & 8.6  \\
PIQA       & 62.5 & 72.2 & +9.7 & 21.6 & 5.1  & 16.4 \\
WinoGrande & 57.5 & 62.2 & +4.7 & 32.9 & 13.8 & 19.1 \\
\midrule
\textbf{Avg.} & \textbf{48.8} & \textbf{54.1} & \textbf{+5.3} & \textbf{22.7} & \textbf{9.2} & \textbf{13.5} \\
\bottomrule
\end{tabular}
\caption{Flip analysis between the original model and SSMoE. \textbf{C$\rightarrow$W} denotes harmful flips from correct to wrong predictions, while \textbf{W$\rightarrow$C} denotes beneficial flips from wrong to correct predictions. SSMoE produces more beneficial than harmful flips on average.}
\label{tab:flip_analysis}
\end{table}

\paragraph{Perplexity.}
We also evaluate perplexity on WikiText-2 using GPT-OSS-20B. Perplexity provides a direct measure of whether routing changes introduce harmful distribution shift in language modeling behavior. As reported in Table~\ref{tab:ppl_wikitext2}, SSMoE achieves lower perplexity than both the original model and the Router baseline. This suggests that SSMoE improves compatibility with the data distribution rather than degrading general language modeling quality. We note that the absolute perplexity values are higher than those commonly reported for standard language-modeling benchmarks because GPT-OSS-20B is an instruction-tuned LLM. Therefore, the main comparison is the relative difference under the same evaluation setup. The lower perplexity of SSMoE compared with the Router baseline further indicates that the improvement is not merely caused by modifying the routing behavior, but by selecting experts more effectively.

\begin{table}[t]
\centering
\small
\begin{tabular}{lllc}
\toprule
Data & Model & Method & PPL $\downarrow$ \\
\midrule
 &  & Original & 82.12 \\
 WikiText-2          &    GPT-OSS-20B         & Router   & 75.91 \\
           &             & SSMoE    & \textbf{73.23} \\
\bottomrule
\end{tabular}
\caption{Perplexity comparison on WikiText-2 using GPT-OSS-20B. Lower perplexity is better. SSMoE reduces perplexity compared with both the original model and the Router baseline.}
\label{tab:ppl_wikitext2}
\end{table}





\subsection{Correlation of Expert Selection between SSMoE and SMoE Routers}
\label{app:selection_correlation}

We further analyze whether SSMoE simply reproduces the expert choices of the original SMoE router or provides a complementary routing signal. To this end, we measure the mean overlap between the experts selected by the eigenvector-based routing mechanism and those selected by the original learned router. A high overlap would suggest that SSMoE mostly mimics the baseline router, whereas a low overlap indicates that the eigenvector descriptors induce substantially different expert selection patterns.

Table~\ref{tab:selection_overlap} reports the mean overlap across eight benchmarks. SSMoE exhibits consistently low overlap with the original router, with an average overlap of only $0.1531$. This shows that the eigenvector-based routing mechanism does not trivially replicate the original router's decisions. Instead, it selects a different subset of experts, suggesting that the eigenvector descriptors capture complementary information about expert specialization. This complementary selection signal helps explain why SSMoE can improve performance over the original routing mechanism.

\begin{table}[t]
\centering
\small
\begin{tabular}{lc}
\toprule
Task & Mean Overlap \\
\midrule
ARC-C      & 0.1581 \\
ARC-E      & 0.1586 \\
BoolQ      & 0.1526 \\
GSM8K      & 0.1581 \\
HellaSwag  & 0.1488 \\
OBQA       & 0.1462 \\
PIQA       & 0.1540 \\
WinoGrande & 0.1486 \\
\midrule
\textbf{Mean} & \textbf{0.1531} \\
\bottomrule
\end{tabular}
\caption{Mean overlap between experts selected by SSMoE's eigenvector-based routing and the original SMoE router. Lower overlap indicates that SSMoE selects substantially different experts rather than simply mimicking the learned router.}
\label{tab:selection_overlap}
\end{table}





\subsection{Effect of Expert Capacity on Mathematical Reasoning}
\label{app:gsm8k_expert_capacity}

We further investigate the performance drop observed on GSM8K under the pruned expert setting. Our analysis suggests that this drop is mainly caused by reducing the available expert pool, which can remove task-critical experts for reasoning-heavy domains such as mathematics. Unlike some knowledge or commonsense benchmarks, mathematical reasoning often relies on a broader set of specialized computational and symbolic capabilities. Therefore, pruning experts may disproportionately affect tasks such as GSM8K by excluding experts that are important for multi-step arithmetic reasoning.

To verify this, we compare SSMoE under two settings: using the full expert pool and using only $75\%$ of the experts. As shown in Table~\ref{tab:gsm8k_expert_capacity}, SSMoE with full expert capacity substantially outperforms both the original model and the pruned SSMoE variant. Specifically, full-capacity SSMoE improves over the original model by $+6.6$ points and over $75\%$ of the experts by $+12.6$ points. These results indicate that the degradation on GSM8K is not due to the eigenvector-based routing mechanism itself, but rather to the loss of important experts caused by pruning. This finding suggests that expert capacity is particularly important for mathematical reasoning tasks. While SSMoE can improve expert selection, aggressive expert pruning may remove experts that are necessary for solving reasoning-intensive problems. Thus, for mathematics-heavy benchmarks, preserving the full expert pool is preferable when the goal is to maximize accuracy.

\begin{table}[t]
\centering
\small
\begin{tabular}{lccc}
\toprule
Task & Orig. & SSMoE (Full) & SSMoE (75\% Experts) \\
\midrule
GSM8K & 36.8 & \textbf{43.4} & 30.8 \\
\bottomrule
\end{tabular}
\caption{Effect of expert capacity on GSM8K using GPT-OSS-20B. SSMoE performs strongly when the full expert pool is preserved, while pruning to $75\%$ experts causes a large degradation on this reasoning-heavy task.}
\label{tab:gsm8k_expert_capacity}
\end{table}

\section{Implementation Details}
\label{app:imp}

For the Large Reasoning Model experiments, we evaluate our method using the lm-evaluation-harness library~\citep{eval-harness}. All experiments with the GPT-OSS models~\cite{openai2025gptoss120bgptoss20bmodel} are performed on two H200 GPUs.

For the Large Language Models experiments, we implement our method based on the publicly available MoEE implementation~\citep{li2025your}\footnote{\url{https://github.com/tianyi-lab/MoE-Embedding}}. Due to resource limitations, both our method and the baselines are evaluated using 4-bit quantization with a batch size of 128. Experiments with the \textit{OLMoE-7B} model are conducted on a single H100 GPU, whereas the \textit{Qwen-MoE-7B} and \textit{DeepSeekMoE-16B} models are tested using two H100 GPUs.

For Vision Language models, we implement SSMoE as a modular extension to the publicly available CLIP-MoE framework~\citep{zhang-etal-2025-clip}\footnote{\url{https://github.com/OpenSparseLLMs/CLIP-MoE}}. All experiments involving vision-language tasks are conducted using a single NVIDIA H100 GPU.



\end{document}